\pgfplotsset{compat=1.17}
\tikzset{
	position label/.style={
		below = 4pt,
		text height = 2ex,
		text depth = 1.5ex
	},
	brace/.style={
		decoration={brace, mirror},
		decorate
	}
}
\definecolor{cyan}{cmyk}{1, 0.4, 0, 0}
\title{Tangential Wasserstein Projections}
\author{Florian Gunsilius, Meng Hsuan Hsieh, and Myung Jin Lee}
\date{\today}
\thanks{Email correspondences: \texttt{\{ffg,rexhsieh,leemjin\}@umich.edu}. FG is supported by a MITRE research award. MH gratefully acknowledges financial support from the Ross School of Business. This research was supported in part through computational resources and services provided by Advanced Research Computing (ARC), a division of Information and Technology Services (ITS) at the University of Michigan, Ann Arbor. The authors thank Sinho Chewi, Thibaut Le Gouic, and Philippe Rigollet for helpful discussions and comments. All errors are the authors'.}
\renewcommand*\env@matrix[1][*\c@MaxMatrixCols c]{%
	\hskip -\arraycolsep
	\let\@ifnextchar\new@ifnextchar
	\array{#1}}
\newcolumntype{C}[1]{>{\arraybackslash}p{#1}}
\newtheorem{definition}{Definition}[section]
\newtheorem{proposition}{Proposition}[section]
\newtheorem{corollary}[definition]{Corollary}
\theoremstyle{definition}
\DeclareMathOperator{\R}{\mathbb{R}}
\DeclareMathOperator{\prob}{\mathds{P}}
\DeclareMathOperator{\Normal}{\mathcal{N}}
\DeclareMathOperator{\convexhull}{\mathfrak{Co}}
\DeclareMathOperator{\grad}{\nabla}
\DeclareMathOperator{\Chi}{\mathcal{X}}
\DeclareMathOperator{\YY}{\mathcal{Y}}
\DeclareMathOperator{\PP}{\mathscr{P}}
\DeclareMathOperator{\id}{\mathrm{Id}}
\DeclareMathOperator{\tangent}{\mathcal{T}}
\DeclareMathOperator*{\argmin}{\arg\!\min}
\DeclareMathOperator{\wasserstein}{\mathcal{W}}
\numberwithin{equation}{section}
\renewcommand{\subset}{\subseteq}
\renewcommand{\leq}{\leqslant}
\renewcommand{\geq}{\geqslant}
\renewcommand{\epsilon}{\varepsilon}
\renewcommand{\phi}{\varphi}
\newcommand{\condset}[4]{\left\{ #1  : \: #2 #3 #4 \right\}}
\begin{document}

\begin{abstract}
    We develop a notion of projections between sets of probability measures using the geometric properties of the $2$-Wasserstein space. It is designed for general multivariate probability measures, is computationally efficient to implement, and provides a unique solution in regular settings. The idea is to work on regular tangent cones of the Wasserstein space using generalized geodesics. Its structure and computational properties make the method applicable in a variety of settings, from causal inference to the analysis of object data. An application to estimating causal effects yields a generalization of the notion of synthetic controls to multivariate data with individual-level heterogeneity, as well as a way to estimate optimal weights jointly over all time periods.
\end{abstract}

\maketitle


\section{Introduction}
The concept of projections, that is, approximating a target quantity of interest by an optimally weighted combination of other quantities, is of fundamental relevance in statistics. Projections are generally defined between random variables in appropriately defined linear spaces \citep[e.g.][chapter 11]{van2000asymptotic}. In modern statistics and machine learning applications, the objects of interest are often probability measures themselves. Examples range from object- and functional data \citep[e.g.][]{marron2014overview} to causal inference with individual heterogeneity \citep[e.g.][]{athey2015machine}. 

We introduce a notion of projection between sets of probability measures supported on Euclidean spaces. The proposed definition is applicable between sets of general probability measures with different supports and possesses good computational and statistical properties. It also provides a unique solution to the projection problem under mild conditions and can replicate the geometric properties of the target measure, such as its shape and support. To achieve this, we work in the $2$-Wasserstein space, that is, the set of all probability measures with finite second moments equipped with the $2$-Wasserstein distance \citep{villani_optimal_2009}. 

Importantly, we focus on the multivariate setting, i.e.~we consider the Wasserstein space over some Euclidean space $\mathbb{R}^d$, denoted by $\wasserstein_2$, where the dimension $d$ can be high. The multivariate setting poses particular challenges from a mathematical, computational, and statistical perspective. In particular, $\wasserstein_2$ is a positively curved metric space for $d>1$ \citep[e.g.][]{ambrosio_gradient_2008, kloeckner2010geometric}, which requires us to develop a definition of projection on positively curved metric spaces. Moreover, the $2$-Wasserstein distance between two probability measures is defined as the value function of the Monge-Kantorovich optimal transportation problem \citep[chapter 2]{villani_topics_2003}, which does not have a closed-form solution in multivariate settings. This is coupled with a well-known statistical curse of dimensionality for general measures \citep{ajtai1984optimal,dudley1969speed, fournier2015rate, talagrand1992matching, talagrand1994transportation, weed2019sharp}. 

These challenges have impeded the development of a method of projections between potentially high-dimensional probability measures. A focus so far has been on the univariate and low-dimensional setting. In particular, \cite{chen2021wasserstein}, \cite{ghodrati2022distributions}, and \cite{pegoraro2021fast} introduced frameworks for distribution-on-distribution regressions in the univariate setting for object data. \cite{bigot_geodesic_2014, cazelles2017log} developed principal component analyses on the space of univariate probability measures using geodesics on the Wasserstein space.

The most closely related works to ours are \cite{bonneel_wasserstein_2016} and \cite{werenski2022measure}. The former develops a regression approach in barycentric coordinates with applications in computer graphics as well as color and shape transport problems. Their method requires solving a bilevel optimization problem, which is computationally costly and does not need to achieve a global solution. The latter works on a tangential structure like we do, but is based on ``Karcher means'' \citep{karcher2014riemannian, zemel2019frechet}. This implies that their method works between absolutely continuous measures with densities that are bounded away from zero, with the target measure lying in the convex hull of the control measures. 

The notion of projection we propose in this article circumvents these challenges in the multivariate setting by lifting the projection problem to the regular tangent space to $\wasserstein_2$ at the target measure based on generalized geodesics. These tangent spaces exist when the target measure is regular, i.e., if it does not give mass to sets of lower Hausdorff dimension. For general measures, we construct a regular tangent space using barycentric projections \citep[appendix 12]{ambrosio_gradient_2008}.  In contrast to the existing approaches, our method works for general probability measures, allows for the target measure to be outside the generalized geodesic convex hull of the control measures, and can be implemented by a constrained linear regression, which minimizes computational costs. In particular, Propositions \ref{Prop:Solution_Characterization} and \ref{Prop:Solution_Characterization_general} show that our method is a projection of the target onto the generalized geodesic convex hull of the control measures. 

The method hence transforms the projection problem on the positively curved Wasserstein space into a linear optimization problem in the regular tangent space, which provides a unique solution to the projection problem under mild assumptions. This problem takes the form of a deformable template \citep{boissard2015distribution, yuille_deformable_1991}, which connects our approach to this literature. The method can be implemented in three steps: (i) obtain the general tangent cone structure at the target measure, (ii) construct a regular tangent space if it does not exist, and (iii) perform a linear regression to carry out the projection in the tangent space. 

The challenging part of the implementation is lifting the problem to the tangential structure: this requires computing the corresponding optimal transport plans between the target and each measure used in the projection. Many methods have been developed for this, see for instance \citet{benamou2000computational, jacobs2020fast, makkuva2020optimal, peyre2019computational, ruthotto2020machine} and references therein. Other alternatives compute approximations of the optimal transport plans via regularized optimal transport problems \citep{peyre2019computational}, such as entropy regularized optimal transport \citep{galichon2010matching, cuturi2013sinkhorn}. The proposed projection approach is compatible with any such method. We provide results for the statistical consistency when estimating the measures via their empirical counterparts in practice. 

To demonstrate the efficiency and utility of the proposed method, we extend the classical synthetic control estimator \citep{abadie2003economic,abadie2010synthetic} to settings with observed individual heterogeneity in multivariate outcomes. This lets us perform the synthetic control method on the joint distribution of several outcomes and also estimate one set of optimal weights over all pre-intervention time periods. This complements the recently introduced method in \citet{gunsilius_distributional_2021}, which is designed for univariate outcomes. 

We apply this synthetic controls estimator to estimate the causal effect of Medicaid expansion on the population in individuals states. Specifically, we exploit the fact that the Affordable Care Act (ACA) allows individual states to decide whether to adopt such expansion. We use Montana as our target state, which adopted Medicaid expansion in 2016. Using the American Community Survey (ACS; \cite{ruggles2019ipums}) data collected between 2010 and 2016, we evaluate such effects by estimating a counterfactual (``synthetic'') Montana, had the state not adopted Medicaid expansion. We conclude that the policy induces nontrivial, positive effects on Medicaid enrollment, earnings, and labor supply, while its effect on employment agrees with some previous estimates, but much less in magnitude compared to the other effects we estimated.

\section{Methodology}

\subsection{The $2$-Wasserstein space $\wasserstein_2(\mathbb{R}^d)$}
For probability measures $P_X, P_Y\in \mathscr{P}(\mathbb{R}^d)$ with supports $\Chi, \YY \subset \R^d$, respectively, the $2$-Wasserstein distance $W_2 (P_X, P_Y)$ is defined as
\begin{align}\label{Eq:wass}
    W_2 (P_X, P_Y) \coloneqq \left( \underset{\gamma \in \Gamma (P_X, P_Y) }{\min} \int_{\Chi \times \YY} \abs{x-y}^2 \dif \gamma (x,y) \right)^{\frac{1}{2}}.
\end{align}
Here, $|\cdot|$ denotes the Euclidean norm on $\mathbb{R}^d$ and 
\[\Gamma(P_X,P_Y)\coloneqq \left\{\gamma\in \mathscr{P}(\mathbb{R}^d\times\mathbb{R}^d): (\pi_1)_\#\gamma = P_X, \thickspace (\pi_2)_\#\gamma = P_Y\right\}\] is the set of all couplings of $P_X$ and $P_Y$. The maps $\pi_1$ and $\pi_2$ are the projections onto the first and second coordinate, respectively, and $T_\# P$ denotes the pushforward measure of $P$ via $T$, i.e.~for any measurable $A\subset\mathcal{Y}$, $T_\# P(A) \equiv P(T^{-1}(A))$. An optimal coupling $\gamma\in \Gamma(P_X,P_Y)$ solving the optimal transport problem \eqref{Eq:wass} is an \textit{optimal transport plan}. By Prokhorov's theorem, a solution always exists in our setting. 

When $P_X$ is \emph{regular}, i.e.~when it does not give mass to sets of lower Hausdorff dimension in its support, then the optimal transport plan $\gamma$ solving \eqref{Eq:wass} is unique and takes the form $\gamma = (\id \times \nabla\varphi)_\# P_X$, where $\id$ is the identity map on $\mathbb{R}^d$ and $\nabla\varphi(x)$ is the gradient of some convex function. This result is known as Brenier's theorem \citep[Theorem 2.12]{brenier1991polar, mccann1997convexity, villani_topics_2003}. By definition, all measures that possess a density with respect to Lebesgue measure are regular. In the following we will distinguish between regular and general measures. 


The $2$-Wasserstein space $\wasserstein_2\equiv\wasserstein_2(\mathbb{R}^d)\equiv (\PP_2 (\R^d), W_2)$ is the metric space defined on the set $\mathscr{P}_2(\mathbb{R}^d)$ of all probability measures with finite second moments supported on $\mathbb{R}^d$, with the $2$-Wasserstein distance as the metric. It is a complete and separable metric space \citep[Proposition 7.1.5]{ambrosio_gradient_2008} and also possesses a geometric structure that we exploit. In particular, it is a geodesically complete space in the sense that between any two measures $P,P'\in\wasserstein_2$, one can define a geodesic $P_t: [0,1] \to\wasserstein_2$ via the interpolation \citep{ambrosio_gradient_2008, mccann1997convexity} $P_t\coloneqq (\pi_t)_\# \gamma$, where $\gamma$ is an optimal transport plan and $\pi_t: \mathbb{R}^d\times\mathbb{R}^d\to\mathbb{R}^d$ is defined through $\pi_t(x,y)\coloneqq (1-t)x + ty$. Using this, it can be shown that $\wasserstein_2$ is a positively curved metric space $d>1$ \citep[Theorem 7.3.2]{ambrosio_gradient_2008} and flat for $d=1$ \citep{kloeckner2010geometric}, where curvature is defined in the sense of Aleksandrov \citep{aleksandrov1951theorem}. This difference in the curvature properties is the main reason for why the multivariate setting requires different approaches compared to the established results for measures on the real line.

\subsection{Projections in $\wasserstein_2$, barycenters, and existing challenges}
In linear spaces, the idea of projecting an element on a set of other elements is a linear or convex combination of these elements in the set. The analogues of averages in vector spaces are Fr\'echet means or barycenters in metric spaces. 

For Wasserstein spaces, this concept has been introduced in our setting in  \citet{agueh_barycenters_2011} and in a more abstract sense in \citet{carlier2010matching}. For any collection of probability measures $\set{P_j}_{1 \leq j \leq J}\subset\wasserstein_2$, their weighted barycenter $\bar{P}(\lambda)$ for given weights $\lambda\equiv (\lambda_1,\ldots,\lambda_J)\in\Delta^J$ is a solution of the following minimization problem:
\begin{equation}
    \label{Eq:Barycenter}
  \bar{P}(\lambda)\in  \underset{P \in \PP_2 (\R^d)}{\argmin} \sum_{j=1}^{J} \frac{\lambda_j}{2} W^2_2 (P, P_j).
\end{equation}
The weights $\lambda$ are defined to lie in the $J$-dimensional probability simplex $\Delta^{J}$ of nonnegative vectors in $\mathbb{R}^J$ that sum to unity. Prokhorov's theorem implies that a solution to \eqref{Eq:Barycenter} exists \citep[Proposition 2.3]{agueh_barycenters_2011}.

Throughout, we focus on the case where $\lambda\in \Delta^{J}$, because this provides a natural notion of \emph{interpolation} between the given measures $P_j$. We could also construct a linear projection that corresponds to a ``geodesic \emph{extrapolation}'' by relaxing the requirement that all weights $\lambda_j$ need to be non-negative. In the classical setting of random variables mapping to some Euclidean space and not random measures, this relaxation is the natural analogue to a linear regression, see \citet{abadie2015comparative} for a discussion. We focus on the interpolation setting because it is in line with the notion of a projection onto the convex hull spanned by other elements. All of our results can be extended to the extrapolation setting in principle.

One way to extend the notion of projection between a target probability measure $P_0$ and a set of control measures $\{P_j\}_{j=1,\ldots,J}$ in $\wasserstein_2$ would hence consist in finding the optimal weights $\lambda^*\in\Delta^{J}$ for which an induced barycenter $\bar{P}(\lambda^*)$ solving \eqref{Eq:Barycenter} is as close as possible to $P_0$. 
This would lead to the following bi-level optimization problem, assuming that the barycenter $\bar{P}(\lambda)$ is unique for given $\lambda$: 
\begin{equation}
    \label{Eq:BarycenterProblem}
    \lambda^* \in \underset{\lambda \in \Delta^{J}}{\argmin} \mbox{}\thickspace\medspace W_2 (P_0, \bar{P}(\lambda)), \qquad \text{where}\quad \bar{P}(\lambda) = \underset{P \in \PP_2 (\R^d)}{\argmin} \sum_{j=1}^{J} \frac{\lambda_j}{2} W^2_2 (P, P_j).
\end{equation}
A version of this approach is used in \cite{bonneel_wasserstein_2016} to define a notion of regression between probability measures on rectangular supports in low dimensions. The challenges with this approach are mathematical and computational. Importantly, the optimal weights $\lambda^*$ need not be unique. This is not an issue for the applications considered in \cite{bonneel_wasserstein_2016}, like color transport; however, it is important in statistical settings when the weights convey information used in further procedures, like causal inference via synthetic controls, where the optimal weights are used to introduced a counterfactual outcome of a treated unit had it not been treated \citep{abadie2003economic, abadie2010synthetic, abadie2021using}. Moreover, the bi-level optimization structure makes solving the problem prohibitively costly for higher-dimensional distributions. \cite{bonneel_wasserstein_2016} introduce a gradient descent approach based on an entropy-regularized analogue of $W_2$ \citep{cuturi2013sinkhorn, peyre2019computational} that can be implemented in settings with low-dimensional empirical measures of rectangular support. However, in higher dimensions and with general measures, an efficient implementation of \eqref{Eq:BarycenterProblem} that can produce unique weights is absent. 

\subsection{Tangential structures on $\wasserstein_2$}
We circumvent these difficulties by exploiting a tangential structure that can be defined on $\wasserstein_2$ \citep{ambrosio_gradient_2008,otto2001geometry}. In particular, it allows us to entirely circumvent solving a bi-level optimization problem as the one in \eqref{Eq:BarycenterProblem}. 

The tangential structure relies on the fact that geodesics $P_t$ in $\wasserstein_2$ are linear in the transport plans $(\pi_t)_\#\gamma$. This implies a geometric tangent cone structure at each measure $P\in\wasserstein$ that can be defined as the closure in $\PP_2(\mathbb{R}^d)$ of the set 
\[\mathcal{G}(P)\coloneqq \left\{\gamma\in \PP_2(\mathbb{R}^d\times\mathbb{R}^d): (\pi_1)_\#\gamma = P,\thickspace\medspace (\pi_1,\pi_1+\varepsilon\pi_2)_\#\gamma\thickspace\text{is optimal for some $\varepsilon>0$}\right\}\] 
with respect to the local distance 
\begin{equation}\label{Eq:W_P_dist}
W_P^2(\gamma_{12},\gamma_{13})\coloneqq \min\left\{\int_{(\mathbb{R}^d)^3}\left\lvert x_2-x_3\right\rvert^2\dif \gamma_{123}: \gamma_{123}\in\Gamma_1(\gamma_{12},\gamma_{13})\right\},
\end{equation}
where $\gamma_{12}$ and $\gamma_{13}$ are couplings between $P$ and some other measures $P_2$ and $P_3$, respectively, and $\Gamma_1(\gamma_{12},\gamma_{13})$ is the set of all $3$-couplings $\gamma_{123}$ such that the projection of $\gamma_{123}$ onto the first two elements is $\gamma_{12}$ and the projection onto the first and third element is $\gamma_{13}$ \citep[Appendix 12]{ambrosio_gradient_2008}. We can then define the exponential map at $P$ with respect to some tangent element $\gamma\in\mathcal{G}(P)$ by
\begin{equation*}
    \exp_P(\gamma) = (\pi_1+\pi_2)_\#\gamma.
\end{equation*}
This tangent cone can be constructed at every $P\in\wasserstein$, irrespective of its support. 

In the case where $P$ is absolutely continuous with respect to Lebesgue measure, the definition simplifies. The corresponding optimal transport plan between $P$ and any other measure measure $Q\in\wasserstein_2$ is then supported on the graph of the gradient of a convex function $\nabla\varphi$ by Brenier's theorem. This allows the introduction of a regular tangent cone at $P$, defined via \citep[section 8.4]{ambrosio_gradient_2008}
\begin{equation}
    \label{Eq:TangentSpace_Target}
    \tangent_{P} \wasserstein_2 (\mathbb{R}^d) \coloneqq \overline{\left\{t ( \grad \phi_j - \id): \left(\id\times \grad \phi_j\right)_\# P \thickspace\medspace\text{is optimal in $\Gamma(P,(\nabla\varphi_j)_\#P)$},\thickspace\medspace t > 0\right\}}^{L^2(P)},
\end{equation}
where $\overline{A}^{L^2(P)}$ defines the closure of the set $A$ with respect to the distance induced by the $L^2$-norm on $P$.
Interestingly, this tangent cone is actually a linear tangent space \citep[Theorem 8.5.1]{ambrosio_gradient_2008}. In the following, we therefore call $\mathcal{T}_P\wasserstein_2$ the regular tangent space \eqref{Eq:TangentSpace_Target}.

\subsection{Tangential Wasserstein projections for regular target measures}
We want to define a natural analogue to the barycenter projection problem \eqref{Eq:BarycenterProblem} in the regular tangent space of the Wasserstein space. A starting point for this is to consider a characterization of the barycenter $\bar{P}(\lambda)$ for fixed weights of a set $\{P_j\}_{j\in\llbracket J\rrbracket}$ in regular tangent spaces. \citet[Equation (3.10)]{agueh_barycenters_2011} show that if at least one of the measures is absolutely continuous with respect to Lebesgue measure, then $\bar{P}(\lambda)$ can be characterized via 
\begin{equation}
    \label{Eq:BarycenterRequirement}
    \sum_{j=1}^{J} \lambda_j \left(\grad \phi_j - \id\right) = 0,
\end{equation}
where $\{\phi_j\}_{j \in \llbracket J \rrbracket}$ are the optimal transport maps from the barycenter to the respective measure $P_j$, i.e.~$(\phi_j)_\# \bar{P}(\lambda) = P_j$. Each term of the summand in \eqref{Eq:BarycenterRequirement} is an element in $\tangent_{\bar{P}(\lambda)} \wasserstein_2 (\mathbb{R}^d)$ by construction. 

More generally, the condition \eqref{Eq:BarycenterRequirement} is a sufficient condition for $\bar{P}(\lambda)$ to be a ``Karcher mean'' \citep{karcher2014riemannian} in $\wasserstein_2$ \citep{zemel2019frechet}. In fact, a ``Karcher mean'' of a set of measures $\{P_j\}_{j\in\llbracket J\rrbracket}$ is defined as the gradient of the Fr\'echet functional in $\wasserstein_2$ and is characterized through \eqref{Eq:BarycenterRequirement} holding $\bar{P}(\lambda)$-almost everywhere. 
\eqref{Eq:BarycenterRequirement} is a stronger condition because it is assumed to hold at every point in the support of $\bar{P}(\lambda)$, not just almost every point. \citet{alvarez2016fixed} use this characterization to introduce a fixed-point approach to compute Wasserstein barycenters, and \cite{werenski2022measure} use this structure to introduce a replication approach for absolutely continuous measures whose densities are bounded away from zero and whose target measure lies inside the convex hull of the control measures. Related is the recent definition of weak barycenters in \citet{cazelles2021novel}, where the authors replace the optimal transport maps from the classical optimal transport problem by the weak optimal transport problem introduced in \citet{gozlan2017kantorovich}. Heuristically, this characterization is that of a \textit{deformable template}. A measure $P$ is a deformable template if there exists a set of deformations $\set{\psi_{j}}_{j=1,\dots,J}$ such that ${\psi_{j}}_{\#} P = P_j$, in a way that their weighted average is ``as close to the identity'' as possible. In our setting $\psi_j\equiv\nabla\varphi_j-\id$ \citep{anderes_discrete_2015,boissard2015distribution,yuille_deformable_1991}.

In our setting of interest, we are given a target measure $P_0$ that we want to replicate given a set $\{P_j\}_{j\in\llbracket J\rrbracket}$. The key idea for this is to adapt the characterization \eqref{Eq:BarycenterRequirement} \emph{with respect to the target measure} $P_0$. That is, we aim to find the optimal weights $\lambda^*\in\Delta^{J}$ that satisfy
\begin{equation}
    \label{Eq:Barycenter_Solving_Simple_Method}
    \lambda^*\coloneqq \underset{\lambda \in \Delta^{J}}{\argmin} \norm{ \sum_{j=1}^{J} \lambda_j \left( \grad \phi_j - \id \right) }_{L^2(P_0)}^2,
\end{equation}
where $\nabla\varphi_j$ are the optimal transport maps between the target $P_0$ and the control measures $P_j$, $j\in\llbracket J\rrbracket$. 

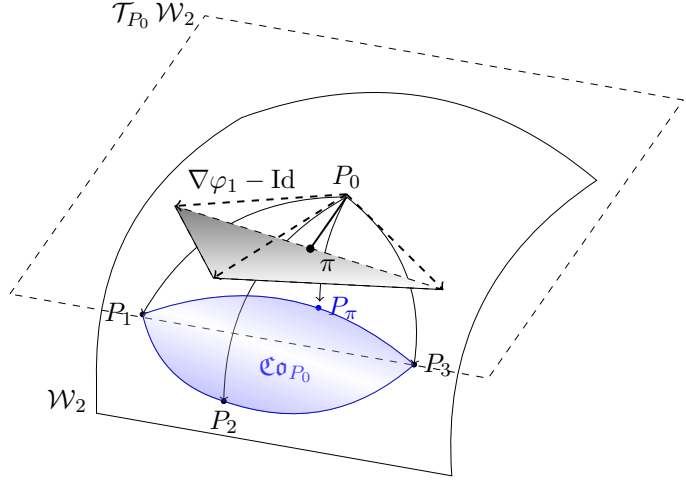
\begin{figure}[H]
\centering
\resizebox{0.55\textwidth}{!}{%
	\begin{tikzpicture}[x={(170:1cm)},y={(55:.7cm)},z={(90:0.8cm)}]
      \draw [dashed, looseness=.4] (3.5,-3.5,0) -- (3.5,3.5,0) -- (-3.5,3.5,0) -- (-3.5,-3.5,0) -- cycle;
      \filldraw[black] (3.5,3.5,0) circle (0pt) node[anchor=east] {{$\mathcal{T}_{P_0} \wasserstein_2$}}; 
      \draw (2.6,-2.6,-2) node[below left] {$\wasserstein_2$}
      to[bend left] (2.6,2.6,-1)
      to[bend left] coordinate (mp) (-2.6,2.6,-1)
      to[bend right] (-2.6,-2.6,-2.6)
      to coordinate (mm) (2.6,-2.6,-2.6)
      -- cycle;
      \draw (0,0,0) node[above] {$P_0$};

      \draw[->] (0,0,0) to [bend right] (2.7,-0.7,-2.2);
      \filldraw[black] (2.7,-0.7,-2.2) circle (1pt) node[anchor=east]{$P_1$};
      \draw[->] (0,0,0) to [bend right] (0.9,-2.2,-2.3);
      \filldraw[black] (0.9,-2.2,-2.3) circle (1pt) node[anchor=north]{$P_2$};
      \draw[->] (0,0,0) to [bend left] (-1.6,-1.5,-1.6);
      \filldraw[black] (-1.6,-1.5,-1.6) circle (1pt) node[anchor=west]{$P_3$};
      %
       \draw[->] (0,0,0) to [bend right=13] (-0.3,-1.7,-0.6);
       \filldraw[blue] (-0.2,-1.5,-0.88) circle (1pt) node[anchor=west]{$P_\pi$};
      \draw node[anchor=north] at (1.7,0.4,0) {$\nabla\varphi_1-\id$};
      \shadedraw  (-1.7,-0.7,-0.8) -- (1.5,-1.1,-1) -- (1.9,-1.5,0.5);
      \draw [dashed, looseness=.4] (-1.7,-0.7,-0.8) -- (1.5,-1.1,-1) -- (1.9,-1.5,0.5) -- cycle;
      \draw [dashed,thick,->] (0,0,0.05) node[below left] {} to (1.9,-1.5,0.5);
      \draw [dashed,thick,->] (0,0,0.05) node[below left] {} to (1.5,-1.1,-1);
      \draw [dashed, thick,->] (0,0,0.05) node[below left] {} to (-1.7,-0.7,-0.8);
      \draw[-, thick] (0,0,0) to (0,-1.3,0);
      \filldraw[black] (0,-1.3,0) circle (1.5pt);
      \draw node[anchor=west] at (0,-1.3,-0.3) {$\pi$};
      \draw[blue] (2.7,-0.7,-2.2) to[bend right] (0.9,-2.2,-2.3);
       \draw[blue] (0.9,-2.2,-2.3) to[bend right] (-1.6,-1.5,-1.6);
       \draw[blue] (-1.6,-1.5,-1.6) to[bend right] (2.7,-0.7,-2.2);
       \draw node[blue,anchor = west, rotate = 0] at (0.4,-2.5,-1.4) {$\convexhull_{P_0}$};
       \shadedraw [left color = blue, right color = blue, middle color = white, opacity = 0.3, shading angle = 340] (2.7,-0.7,-2.2) to[bend right] (0.9,-2.2,-2.3) to [bend right] (-1.6,-1.5,-1.6) to [bend right] (2.7,-0.7,-2.2);
    \end{tikzpicture}
    }
\caption{Tangential Wasserstein projection for a regular target $P_0$.}\label{Fig:wasserstein_projection}
\flushleft
\small
Thick dashed lines are tangent vectors $\nabla\varphi_j-\id$. The gray shaded region is their convex hull in $\mathcal{T}_{P_0}\wasserstein_2$ and $\pi$ is the projection of $\id$ onto this convex hull. $P_\pi\coloneqq \exp_{P_0}(\pi)$ is the projection of $P_0$ onto the generalized geodesic convex hull $\convexhull_{P_0}\left(\{P_1,P_2,P_3\}\right)\subset\wasserstein_2$ (blue). 
\end{figure}

We now show that this approach is in fact a projection of $P_0$ onto the \emph{generalized geodesic convex hull} of the control measures $P_j$ with respect to $P_0$ as illustrated in Figure \ref{Fig:wasserstein_projection}. To define this notion of convex hull, we extend the definition of generalized geodesics \citep[section 9.2]{ambrosio_gradient_2008}, and in particular the definition of $W_P$ to the multimarginal setting, by defining, for given couplings $\gamma_{0j}\in \Gamma(P_0,P_j)$, $j\in\llbracket J\rrbracket$
\begin{equation}\label{W_P-general}
W_{P_0;\lambda}^2(\gamma_{01},\gamma_{02},\ldots,\gamma_{0J})\coloneqq \min\left\{\int_{(\mathbb{R}^d)^{J+1}} \sum_{j=1}^J\lambda_j \left\lvert x_j-x_0\right\rvert^2\dif \bm{\gamma}: \bm{\gamma}\in\Gamma_1(\gamma_{01},\ldots,\gamma_{0J})\right\},
\end{equation}
where $\Gamma_1(\gamma_{01},\ldots,\gamma_{0J})\subset\Gamma(P_0,P_1,\ldots,P_J)$ is the set of all $(J+1)$-couplings $\bm{\gamma}$ such that the projection of $\bm{\gamma}$ onto the first- and $j$-th element is $\gamma_{0j}$.
Note that this definition is similar to the multimarginal definition of the $2$-Wasserstein barycenter \citep{agueh_barycenters_2011, gangbo1998optimal}, but ``centered'' at $P_0$. Based on this, we define the generalized geodesic convex hull of measures $\{P_j\}_{j\in\llbracket J\rrbracket}$ with respect to the measure $P_0$ as

\begin{multline}
    \label{Eq:Gen_cvxhull}
    \convexhull_{P_0} \left( \set{P_j}_{j=1}^{J} \right) \coloneqq\left\{P(\lambda)\in\mathscr{P}_2(\mathbb{R}^d): P(\lambda) = \left(\sum_{j=1}^J\lambda_j\pi_{j+1}\right)_{\raisebox{12pt}{$\scaleobj{0.75}{\#}$}}\bm{\gamma},\right.\\\left. \thickspace\medspace\bm{\gamma}\thickspace\medspace\text{solves $W^2_{P_0;\lambda}(\gamma_{01},\ldots,\gamma_{0J})$}, 
                \vphantom{\left(\sum_{j=1}^J\lambda_j\pi_j\right)_\#\bm{\gamma}} 
        \thickspace\medspace\gamma_{0j}\thickspace\medspace\text{is optimal in $\Gamma(P_0,P_j)$}\thickspace\medspace\forall j\in\llbracket J\rrbracket, \quad \lambda\in \Delta^{J}\right\}.
\end{multline}

Based on these definitions we can show that our approach is a projection of the target $P_0$ onto $\convexhull_{P_0}\left(\{P_j\}_{j=1}^J\right)$.
\begin{proposition}
    \label{Prop:Solution_Characterization}
    Consider a regular target measure $P_0$ and a set $\{P_j\}_{j\in\llbracket J\rrbracket}$ of general control measures. Construct the measure $P_\pi$ as 
    \[P_\pi\coloneqq \exp_{P_0}\left(\sum_{j=1}^J \lambda_j^*(\nabla\varphi_j-\id)\right) ~, \] where the optimal weights $\lambda^*\in \Delta^{J}$ are obtained by solving \eqref{Eq:Barycenter_Solving_Simple_Method} and $\nabla\varphi_j$ are the optimal maps transporting $P_0$ to $P_j$, respectively. Then $P_\pi$ is the unique metric projection of $P_0$ onto $\convexhull_{P_0} \left( \set{P_j}_{j=1}^{J} \right)$. 
\end{proposition}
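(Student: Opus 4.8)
The plan is to show that, because $P_0$ is regular, the generalized geodesic convex hull $\convexhull_{P_0}(\{P_j\}_{j=1}^J)$ is the image under the exponential map of an ordinary convex subset of the \emph{linear} tangent space $\tangent_{P_0}\wasserstein_2$, and that along this hull the Wasserstein distance $W_2(P_0,\cdot)$ agrees exactly with the $L^2(P_0)$-objective appearing in \eqref{Eq:Barycenter_Solving_Simple_Method}. The entire statement then reduces to the Hilbert projection theorem.

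First I would simplify the hull. Since $P_0$ is regular, Brenier's theorem makes each optimal coupling $\gamma_{0j}$ unique and deterministic, $\gamma_{0j}=(\id\times\grad\varphi_j)_{\#}P_0$. Consequently any $(J+1)$-coupling $\bm{\gamma}\in\Gamma_1(\gamma_{01},\ldots,\gamma_{0J})$ must be concentrated on the graph of $x\mapsto(x,\grad\varphi_1(x),\ldots,\grad\varphi_J(x))$, so $\Gamma_1$ is the single plan $\bm{\gamma}=(\id,\grad\varphi_1,\ldots,\grad\varphi_J)_{\#}P_0$ and the minimization defining $W^2_{P_0;\lambda}$ is vacuous. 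Applying $\sum_{j}\lambda_j\pi_{j+1}$ to this unique $\bm{\gamma}$ yields $P(\lambda)=\bigl(\sum_{j}\lambda_j\grad\varphi_j\bigr)_{\#}P_0$, so that
\[\convexhull_{P_0}\left(\{P_j\}_{j=1}^J\right)=\left\{\Bigl(\id+\sum_{j=1}^J\lambda_j(\grad\varphi_j-\id)\Bigr)_{\#}P_0:\lambda\in\Delta^{J}\right\}=\exp_{P_0}(K),\]
where $K\coloneqq\mathrm{conv}\{\grad\varphi_j-\id:j\in\llbracket J\rrbracket\}\subset\tangent_{P_0}\wasserstein_2$; here I used $\sum_j\lambda_j=1$ to absorb the $\id$.

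The key step is to prove the isometry identity $W_2^2(P_0,P(\lambda))=\|\sum_j\lambda_j(\grad\varphi_j-\id)\|_{L^2(P_0)}^2$ for every $\lambda\in\Delta^{J}$. Each $\grad\varphi_j$ is the gradient of a convex function, hence $\sum_j\lambda_j\grad\varphi_j=\grad\bigl(\sum_j\lambda_j\varphi_j\bigr)$ is \emph{again} the gradient of a convex function, since a convex combination of convex functions is convex. By the sufficiency direction of Brenier's theorem, such a map is the optimal transport map from $P_0$ to its pushforward $P(\lambda)$, so its transport cost equals $W_2^2$, namely $\int|\sum_j\lambda_j\grad\varphi_j(x)-x|^2\,\dif P_0(x)$. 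This identity is the heart of the argument and the main obstacle: it is exactly what guarantees that $\exp_{P_0}$ is a genuine isometry from $K$ (with the $L^2(P_0)$ metric) onto the hull (with $W_2$), rather than merely a contraction, and it is precisely where the regularity of $P_0$ together with the convexity-preserving structure of generalized geodesics is used.

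Finally I would conclude via the Hilbert projection theorem. By the previous step, minimizing $W_2(P_0,\cdot)$ over $\convexhull_{P_0}$ is the same as minimizing $\|v\|_{L^2(P_0)}$ over $K$, i.e.\ projecting the zero tangent vector onto the closed convex set $K$ in the Hilbert space $L^2(P_0)$. As the convex hull of finitely many points, $K$ is closed and convex, so this projection exists and the minimizing vector $v^{\ast}=\sum_j\lambda_j^{\ast}(\grad\varphi_j-\id)$, with $\lambda^{\ast}$ solving \eqref{Eq:Barycenter_Solving_Simple_Method}, is unique. (The weights $\lambda^{\ast}$ themselves need not be unique if the $\grad\varphi_j-\id$ are affinely dependent, but $v^{\ast}$ is.) Hence $P_\pi=\exp_{P_0}(v^{\ast})$ attains the minimal distance; and any competing metric projection lies in $\convexhull_{P_0}=\exp_{P_0}(K)$, so it corresponds to a norm-minimizing vector in $K$, which must equal $v^{\ast}$, and pushing forward gives the same measure. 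This establishes both optimality and uniqueness of $P_\pi$.
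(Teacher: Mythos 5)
Your proof is correct, and it shares the paper's endgame---reducing \eqref{Eq:Barycenter_Solving_Simple_Method} to a projection onto a closed convex set in the Hilbert space $L^2(P_0)$---but the bridge you build between the tangent-space problem and the $W_2$-minimization over $\convexhull_{P_0}$ is genuinely different and, at the crucial junction, more complete than the paper's. The paper works with the exponential map directly: it establishes continuity via the contraction inequality $W_2^2\left((\nabla\varphi_j)_\#P_0,(\nabla\varphi_k)_\#P_0\right)\leq \left\|\nabla\varphi_j-\nabla\varphi_k\right\|^2_{L^2(P_0)}$ and shows that $\exp_{P_0}$ carries convex combinations in $\tangent_{P_0}\wasserstein_2$ to generalized geodesics, then concludes that the $L^2$-projection of $\id$ onto $\mathcal{C}$ maps to the metric projection; since only the one-sided inequality is recorded there, the identification of the two minimizers is left somewhat implicit. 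You instead prove the exact isometry $W_2^2(P_0,P(\lambda))=\bigl\|\sum_{j}\lambda_j(\nabla\varphi_j-\id)\bigr\|^2_{L^2(P_0)}$ by noting that $\sum_j\lambda_j\nabla\varphi_j=\nabla\bigl(\sum_j\lambda_j\varphi_j\bigr)$ is the gradient of a convex function and hence, by the sufficiency direction of Brenier/McCann, optimal from $P_0$ to its pushforward; this is precisely the keystone that makes the reduction to the Hilbert projection theorem airtight and yields uniqueness of the projected \emph{measure} cleanly. Your preliminary observation that $\Gamma_1(\gamma_{01},\ldots,\gamma_{0J})$ is a singleton when every $\gamma_{0j}$ is deterministic is also a tidy way to see that $\convexhull_{P_0}\left(\{P_j\}_{j=1}^J\right)=\exp_{P_0}(K)$ with $K=\mathrm{conv}\{\nabla\varphi_j-\id\}$, a simplification the paper's definition leaves to be unwound. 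What the paper's route buys in exchange is reusability: the continuity and geodesic-convexity properties of $\exp_{P_0}$ are invoked again nearly verbatim for the general (non-regular) case in Proposition \ref{Prop:Solution_Characterization_general}, where optimal maps are replaced by barycentric projections, whereas your singleton-gluing step would need to be redone there. Your parenthetical that the weights $\lambda^*$ may fail to be unique while the vector $v^*$ (and hence $P_\pi$) is unique is accurate and worth keeping, since the proposition's uniqueness claim concerns $P_\pi$, not $\lambda^*$.
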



\subsection{Tangential Wasserstein projections for general target measures}
Proposition \ref{Prop:Solution_Characterization} holds for a regular target $P_0$. In many practical settings, however, the target outcome is not a regular measure, as in our application in Section \ref{sec:medicaid}. In such settings, the corresponding optimal transport problems \eqref{Eq:wass} between the target and the respective control measures $P_j$ is only achieved via optimal transport plans $\gamma_{0j}$, not maps $\nabla\varphi_j$.
In contrast to the regular setting, these transport maps also do not need to be unique. 

Still, the fundamental idea of the tangential projection can be extended to the more general setting, using transport plans instead of maps. The implementation for regular targets \eqref{Eq:Barycenter_Solving_Simple_Method} is a special case of \eqref{W_P-general} where all plans $\gamma_{0j}$ are achieved via transport maps $\nabla\varphi_j$. This implies that the optimal weights $\lambda^*\in\Delta^{J}$ could in principle be obtained by
\begin{equation}\label{Eq:general_tangential}
    \lambda^* \coloneqq \argmin_{\lambda\in \Delta^{J}} W^2_{P_0;\lambda}(\gamma_{01},\ldots,\gamma_{0J}),
\end{equation}
where $\gamma_{0j}$ are optimal transport plans between the target $P_0$ and the respective control measure $P_j$. By definition, a solution to \eqref{Eq:general_tangential} will provide a projection of the target $P_0$ onto $\convexhull_{P_0}\left(\{P_j\}_{j=1}^J\right)$. However, solving \eqref{Eq:general_tangential} is computationally prohibitive in practice for two reasons. First, it is again a bilevel problem, similar to the direct approach \eqref{Eq:BarycenterProblem}. Second, $W_{P_0;\lambda}^2$ requires computing a joint coupling over $J+1$ marginal distributions, which is computationally infeasible in practice for a reasonably large $J$. 

We therefore rely on barycentric projections to reduce the complex general setting to the regular tangent space and subsequently apply the projection \eqref{Eq:Barycenter_Solving_Simple_Method}. This is computationally inexpensive, as it amounts to computing
\begin{equation}
    \label{Eq:Barycenter_Solving_gen}
    \lambda^*\coloneqq \underset{\lambda \in \Delta^{J}}{\argmin} \norm{ \sum_{j=1}^{J} \lambda_j \left( b_{\gamma_{0j}} - \id \right) }_{L^2(P_0)}^2,
\end{equation}
where 
\[b_{\gamma_{0j}}(x_1)\coloneqq \int_{\mathbb{R}^d} x_2\dif \gamma_{0j,x_1}(x_2)\]
are the barycentric projections of optimal transport plans $\gamma_{0j}$ between $P_0$ and $P_j$. Here, $\gamma_{x_1}$ denotes the disintegration of the optimal transport plan $\gamma$ with respect to $P_0$.  

This approach is a natural extension of the regular setting to general probability measures for two reasons. First, if the optimal transport plans $\gamma_{0j}$ are actually induced by some optimal transport map $\nabla\gamma_j$, then $b_{\gamma_{0j}}$ reduces to this optimal transport map; in this case the general tangent cone $\mathcal{G}(P_0)$ reduces to the regular tangent cone $\mathcal{T}_{P_0}\wasserstein_2$ \citep[Theorem 12.4.4]{ambrosio_gradient_2008}. Second, by the definition of $b_{\gamma}$ and disintegrations in conjunction with Jensen's inequality it holds for all $\lambda\in\Delta^{J}$ that
\begin{equation}\label{Eq:contraction}\norm{ \sum_{j=1}^{J} \lambda_j \left( b_{\gamma_{0j}} - \id \right) }_{L^2(P_0)}^2 \leq W^2_{P_0;\lambda}(\gamma_{01},\ldots,\gamma_{0J}).
\end{equation}
This implies that for general $P_0$ we can also define a convex hull based on barycentric projections, which is of the form
\begin{equation}\label{Eq:Gen_cvxhull_maps}
    \widetilde{\convexhull}_{P_0} \left( \set{P_j}_{j=1}^{J} \right) \coloneqq\left\{P(\lambda)\in\mathscr{P}_2(\mathbb{R}^d): P(\lambda) = \left(\sum_{j=1}^J\lambda_j b_{\gamma_{0j}}\right)_{\raisebox{12pt}{$\scaleobj{0.75}{\#}$}}{P_0}, \quad \lambda\in\Delta^{J}\right\}.
\end{equation}
Furthermore, the contraction property \eqref{Eq:contraction} implies that $\convexhull_{P_0}\subset\widetilde{\convexhull}_{P_0}$, with equality when all transport plans are achieved via maps $\nabla\varphi_j$. Using these definitions, the following defines our notion of projection for general $P_0$ and shows that it projects onto $\widetilde{\convexhull}_{P_0}$. 
\begin{proposition}
    \label{Prop:Solution_Characterization_general}
    Consider a general target measure $P_0$ and a set $\{P_j\}_{j\in\llbracket J\rrbracket}$ of general control measures. Construct the measure $\widetilde{P}_\pi$ as 
    \[\widetilde{P}_\pi\coloneqq \exp_{P_0}\left(\sum_{j=1}^J \lambda_j^*b_{\gamma_{0j}}-\id\right) ~, \] where the optimal weights $\lambda^*\in \Delta^{J}$ are obtained by solving \eqref{Eq:Barycenter_Solving_gen} and $\gamma_{0j}$ are optimal plans transporting $P_0$ to $P_j$, respectively. Then for given optimal plans $\gamma_{0j}$, $\widetilde{P}_\pi$ is the unique metric projection of $P_0$ onto $\widetilde{\convexhull}_{P_0} \left( \set{P_j}_{j=1}^{J} \right)$. 
\end{proposition}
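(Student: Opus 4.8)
The plan is to reduce the metric-projection claim to the Hilbert-space geometry of $L^2(P_0)$ by showing that the contraction bound \eqref{Eq:contraction} is in fact an \emph{equality} when the genuine Wasserstein distance is evaluated along the barycentric convex hull \eqref{Eq:Gen_cvxhull_maps}. Writing $T_\lambda \coloneqq \sum_{j=1}^J \lambda_j b_{\gamma_{0j}}$, every element of $\widetilde{\convexhull}_{P_0}$ is of the form $P(\lambda) = (T_\lambda)_\# P_0$, and $\widetilde{P}_\pi = (T_{\lambda^*})_\# P_0$ for the minimizer $\lambda^*$ of \eqref{Eq:Barycenter_Solving_gen}. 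The candidate coupling $(\id, T_\lambda)_\# P_0$ transports $P_0$ to $P(\lambda)$ at cost exactly $\norm{\sum_{j} \lambda_j (b_{\gamma_{0j}} - \id)}_{L^2(P_0)}^2$, so it suffices to prove that this coupling is optimal for each $\lambda \in \Delta^{J}$; this forces $W_2^2(P_0, P(\lambda)) = \norm{\sum_j \lambda_j(b_{\gamma_{0j}} - \id)}_{L^2(P_0)}^2$, whereupon minimizing $W_2(P_0, \cdot)$ over $\widetilde{\convexhull}_{P_0}$ becomes identical to solving \eqref{Eq:Barycenter_Solving_gen}.

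The core step is to certify optimality of $(\id, T_\lambda)_\# P_0$ via cyclical monotonicity. First I would invoke the structure of optimal plans: each optimal $\gamma_{0j}$ is concentrated on a cyclically monotone set contained in the graph of the subdifferential $\partial\psi_j$ of a proper, convex, lower semicontinuous potential $\psi_j$ (Rockafellar). Disintegrating against $P_0$, the conditional $\gamma_{0j,x}$ is supported, for $P_0$-a.e.\ $x$, in the closed convex set $\partial\psi_j(x)$; since the barycenter of a probability measure on a closed convex set remains in that set, the barycentric projection satisfies $b_{\gamma_{0j}}(x) \in \partial\psi_j(x)$ $P_0$-a.e. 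Applying the subdifferential sum rule to the finite convex combination gives $T_\lambda(x) = \sum_j \lambda_j b_{\gamma_{0j}}(x) \in \partial\Psi_\lambda(x)$ with $\Psi_\lambda \coloneqq \sum_j \lambda_j \psi_j$ convex. Thus $T_\lambda$ is a measurable selection of the subdifferential of a convex function, its graph is cyclically monotone, and $(\id, T_\lambda)_\# P_0$ is therefore an optimal coupling between $P_0$ and $P(\lambda)$. This yields the desired equality and identifies $\widetilde{P}_\pi = P(\lambda^*)$ as a metric projection.

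For uniqueness I would argue entirely in $L^2(P_0)$, so that possible non-uniqueness of $\lambda^*$ is immaterial. The objective in \eqref{Eq:Barycenter_Solving_gen} is the squared $L^2(P_0)$-norm of the linear image $v_\lambda \coloneqq \sum_j \lambda_j(b_{\gamma_{0j}} - \id)$, and the set $\{v_\lambda : \lambda \in \Delta^{J}\}$ is the image of the compact convex simplex under a continuous linear map, hence a closed convex subset of the Hilbert space $L^2(P_0)$ (finiteness $b_{\gamma_{0j}} \in L^2(P_0)$ following from Jensen's inequality applied to the plans of finite cost). The minimizer is the nearest-point projection of the origin onto this closed convex set, which is unique; therefore $v_{\lambda^*}$, hence $T_{\lambda^*} = \id + v_{\lambda^*}$, hence $\widetilde{P}_\pi = (T_{\lambda^*})_\# P_0$, is uniquely determined even when the weights are not.

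I expect the measure-theoretic bookkeeping in the second paragraph to be the main obstacle, rather than the Hilbert-space argument. Specifically, one must ensure that the inclusion $b_{\gamma_{0j}}(x) \in \partial\psi_j(x)$ and the sum-rule inclusion $T_\lambda(x) \in \partial\Psi_\lambda(x)$ hold simultaneously on a single set of full $P_0$-measure, with all selections chosen measurably; this is precisely where the generality of $P_0$ — which may charge the non-differentiability set of $\psi_j$, so that $\partial\psi_j(x)$ is genuinely multivalued — rules out the pointwise shortcut available in the regular case of Proposition \ref{Prop:Solution_Characterization}. Once cyclic monotonicity of $T_\lambda$ is secured $P_0$-almost everywhere, optimality of the natural coupling and the collapse of \eqref{Eq:contraction} to an equality follow.
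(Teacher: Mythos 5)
Your proof is correct and, while it shares the paper's overall strategy of reducing the problem to a Hilbert-space projection in $L^2(P_0)$, it proves from first principles the two steps the paper handles by citation and transfer. The paper's own proof invokes Theorem 12.4.4 of \citet{ambrosio_gradient_2008} for the fact that barycentric projections of optimal plans are optimal maps, asserts that convex combinations of subgradient selections are again subgradient selections of a convex function, and then appeals to the continuity and generalized geodesic convexity of the exponential map established in the proof of Proposition \ref{Prop:Solution_Characterization}. You instead derive the key optimality directly: Rockafellar's theorem puts the support of each optimal $\gamma_{0j}$ inside the graph of $\partial\psi_j$; disintegration plus the fact that the barycenter of a probability measure on a closed convex set stays in that set gives $b_{\gamma_{0j}}(x)\in\partial\psi_j(x)$ for $P_0$-a.e.\ $x$; and the elementary inclusion $\sum_j\lambda_j\partial\psi_j(x)\subset\partial\bigl(\sum_j\lambda_j\psi_j\bigr)(x)$ (no constraint qualification needed in this direction) makes $T_\lambda=\sum_j\lambda_jb_{\gamma_{0j}}$ cyclically monotone, hence an optimal map. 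This buys you the explicit isometry $W_2^2\bigl(P_0,P(\lambda)\bigr)=\norm{\sum_j\lambda_j(b_{\gamma_{0j}}-\id)}_{L^2(P_0)}^2$ for every $\lambda\in\Delta^{J}$, i.e.\ equality in \eqref{Eq:contraction} on the hull \eqref{Eq:Gen_cvxhull_maps}, which is precisely the step the paper leaves implicit when it says that continuity and generalized convexity of the exponential map ``imply the result''; it also cleanly separates uniqueness of the projected measure from possible non-uniqueness of the weights, since the minimal-norm element $v_{\lambda^*}$ of the compact convex set $\{T_\lambda-\id:\lambda\in\Delta^{J}\}\subset L^2(P_0)$ is unique even when $\lambda^*$ is not --- matching the proposition's phrasing ``for given optimal plans $\gamma_{0j}$.'' The paper's argument buys brevity; yours is self-contained and makes the geometric content explicit, namely that the tangential $L^2$ problem and the Wasserstein metric projection onto $\widetilde{\convexhull}_{P_0}$ are literally the same optimization problem.
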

%

Note that in contrast to the regular case in Proposition \ref{Prop:Solution_Characterization}, the optimal plans $\gamma_{0j}$ transporting $P_0$ to $P_j$ need not be unique, i.e., the measures $P_j$ might lie outside the cut locus of $P_0$. However, the projection for fixed $\gamma_{0j}$ is unique. 

The proposed method of projections is hence a well-defined notion of a geodesic metric projection: in the case of a regular measure, our approach is a metric projection of the target measure onto the generalized geodesic convex hull made up of the control measures. In the case where the target measure is not regular, we project onto a slight extension of the generalized geodesic convex hull, which we construct by a barycentric projection. The actual projections \eqref{Eq:Barycenter_Solving_Simple_Method} and \eqref{Eq:Barycenter_Solving_gen} are simple regression problems, which are easy to compute in practice once the tangent structure has been constructed. 

\section{Statistical properties of the weights and tangential projection}
We now provide statistical consistency results for our method when the corresponding measures $\{P_j\}_{j\in \llbracket J\rrbracket}$ are estimated from data. We consider the case where the measures $P_j$ are replaced by their empirical counterparts 
\[\prob_{N_j}(A)\coloneqq N_j^{-1}\sum_{n=1}^{N_j} \delta_{X_{n}}(A)\] for every measurable set $A$ in the Borel $\sigma$-algebra on $\mathbb{R}^d$, where $\delta_x(A)$ is the Dirac measure and $\left(X_{1j},\ldots, X_{N_j,j}\right)$ is an independent and identically distributed set of random variables whose distribution is $P_j$. We explicitly allow for different sample sizes $\bigcup_{j=0}^J N_j = N$ for the different measures. To save on notation we write  $\widehat{\varphi}_{N_j}\equiv \widehat{\varphi}_j$, $\widehat{b}_{0j}\equiv \widehat{b}_{\gamma_{0j}, N_j}$ and $\widehat{\gamma}_{0j}\equiv \widehat{\gamma}_{N_j,N_0}$ in the following.

\begin{proposition}[Consistency of the optimal weights]\label{prop:consistency}
Let $\left\{\prob_{N_j}\right\}_{j=0}^J$ be the empirical measures corresponding to the data $\left(X_{1j},\ldots,X_{N_j j}\right)_{j=0}^J$ which are independent and identical draws from $P_j$, respectively, and are supported on some common latent probability space $(\Omega,\mathscr{A},P)$. Assume $P_j$ has finite second moment. As $N_j\to\infty$ for all $j\in\llbracket J\rrbracket$, the corresponding optimal weights $\widehat{\lambda}^*_N = \left(\widehat{\lambda}_{N_1}^*,\ldots, \widehat{\lambda}_{N_J}^*\right)\in \Delta^{J}$ obtained via 
\begin{equation}
    \label{Eq:Barycenter_Solving_gen_emp}
    \widehat{\lambda}^*_N\coloneqq \underset{\lambda \in \Delta^{J}}{\argmin} \norm{ \sum_{j=1}^{J} \lambda_j \left( \widehat{b}_{0j} - \id \right) }_{L^2(\prob_{N_0})}^2,
\end{equation}
satisfy
\[P\left(\left\lvert\widehat{\lambda}^*_N - \lambda^*\right\rvert>\varepsilon\right) \to 0\qquad\text{for all $\varepsilon>0$} ~, \]
where $\lambda^*$ solve \eqref{Eq:Barycenter_Solving_gen}.
\end{proposition}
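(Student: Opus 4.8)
The plan is to treat this as a standard argmin-consistency (M-estimation) problem, exploiting that the criterion is a quadratic form in $\lambda$ over the compact simplex $\Delta^J$. Writing $v_j\coloneqq b_{\gamma_{0j}}-\id$ and $\widehat v_j\coloneqq \widehat b_{0j}-\id$, the population and empirical objectives are
\[
Q(\lambda)=\sum_{j,k}\lambda_j\lambda_k\innerproduct{v_j}{v_k}_{L^2(P_0)}=\lambda^\T A\lambda,\qquad Q_N(\lambda)=\lambda^\T\widehat A_N\lambda,
\]
with Gram matrices $A_{jk}=\innerproduct{v_j}{v_k}_{L^2(P_0)}$ and $\widehat A_{N,jk}=\innerproduct{\widehat v_j}{\widehat v_k}_{L^2(\prob_{N_0})}$. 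Since every $\lambda\in\Delta^J$ satisfies $\abs{\lambda}\leq 1$, we have $\sup_{\lambda\in\Delta^J}\abs{Q_N(\lambda)-Q(\lambda)}\leq \norm{\widehat A_N-A}_{\mathrm{op}}$, so uniform convergence of the criterion on $\Delta^J$ reduces to the finite-dimensional statement $\norm{\widehat A_N-A}_{\mathrm{op}}\to_P 0$. Granting this, together with $\lambda^*$ being a well-separated minimizer of the continuous function $Q$ on the compact set $\Delta^J$, the argmin-consistency theorem \citep[Theorem 5.7]{van2000asymptotic} delivers $\widehat\lambda^*_N\to_P\lambda^*$.

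Before the Gram-matrix step I would settle the well-separation of $\lambda^*$. The form $Q$ is convex because $A$ is positive semidefinite, so its minimizer over $\Delta^J$ is unique and well-separated as soon as $Q$ is strictly convex along the simplex, i.e.\ whenever the tangent vectors $\{v_j\}$ are affinely independent in $L^2(P_0)$ (equivalently, $A$ restricted to $\{\mu:\sum_j\mu_j=0\}$ is positive definite). I would state this as the standing regularity hypothesis under which $\lambda^*$ in \eqref{Eq:Barycenter_Solving_gen} is unique; for a regular target $P_0$ the maps $\nabla\varphi_j$ are unique by Brenier's theorem, which is the natural setting in which this independence is generic.

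The crux is the convergence $\widehat A_N\to A$, which I would build in three layers. First, because each $P_j$ has a finite second moment and the data are i.i.d., the empirical measures converge in the $2$-Wasserstein metric, $W_2(\prob_{N_j},P_j)\to 0$ almost surely. Second, I would invoke the stability of optimal transport plans under $W_2$-convergence of the marginals \citep[Theorem 5.20]{villani_optimal_2009}: along any subsequence the empirical optimal plans $\widehat\gamma_{0j}$ converge narrowly to an optimal plan in $\Gamma(P_0,P_j)$, and since the transport costs converge as well, this upgrades to convergence of $\widehat\gamma_{0j}$ as measures on $\mathbb{R}^d\times\mathbb{R}^d$ in $W_2$, supplying the uniform integrability of the second coordinate needed next. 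Third, I would pass to barycentric projections: using that $b_\gamma$ is a conditional expectation (hence an $L^2$-contraction by Jensen, as already exploited in \eqref{Eq:contraction}) together with the $W_2$-convergence of the plans, one obtains $\widehat b_{0j}\to b_{\gamma_{0j}}$ in $L^2$, and then the entries $\widehat A_{N,jk}=\frac{1}{N_0}\sum_{n}\innerproduct{\widehat v_j(X_{n0})}{\widehat v_k(X_{n0})}$ converge to $A_{jk}=\int\innerproduct{v_j}{v_k}\dif P_0$, combining convergence of the integrand $\widehat v_j\to v_j$ with that of the integrating measure $\prob_{N_0}\to P_0$.

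I expect the third layer to be the main obstacle, for two reasons. The cross terms $\innerproduct{v_j}{v_k}$ involve two distinct optimal plans sharing only the marginal $P_0$, so they cannot be rewritten as an integral against a single two-marginal plan; the argument must proceed through genuine $L^2(P_0)$-convergence of each barycentric projection rather than through a cost identity. Moreover the integrand and the integrating measure both vary with $N$, so I would control $\frac{1}{N_0}\sum_n\innerproduct{\widehat v_j(X_{n0})}{\widehat v_k(X_{n0})}$ by splitting off the term with the limiting functions $v_j,v_k$ (handled by the law of large numbers against $\prob_{N_0}$) and bounding the remainder via Cauchy--Schwarz by the $L^2(\prob_{N_0})$-norms of $\widehat v_j-v_j$, where the uniform integrability from the $W_2$-plan-convergence is essential. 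The secondary subtlety is that for a general (non-regular) $P_0$ the optimal plans need not be unique, so the subsequential stability limit need not coincide with the fixed $\gamma_{0j}$ of the statement; I would resolve this by the subsequence principle for convergence in probability, noting that the conclusion holds provided every subsequential limit yields the same $A$ and hence the same $\lambda^*$ (automatic when $P_0$ is regular), which is precisely the regime in which $\lambda^*$ is already well-defined.
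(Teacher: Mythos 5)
Your overall architecture is the same as the paper's: establish convergence of the empirical objective, then conclude by a standard argmin-consistency theorem (the paper uses Newey--McFadden Theorem 2.1 where you use van der Vaart Theorem 5.7, and your Gram-matrix reduction $\sup_{\lambda\in\Delta^J}\abs{Q_N(\lambda)-Q(\lambda)}\leq\norm{\widehat A_N-A}_{\mathrm{op}}$ is a clean substitute for the paper's route via Rockafellar's theorem that pointwise convergence of convex functions implies uniform convergence on compacta; your explicit well-separation hypothesis is also implicitly required by the paper's citation of Newey--McFadden). The gap is in your Layer 3, which you correctly identify as the crux but justify with an inference that is false: $W_2$-convergence of the optimal plans $\widehat\gamma_{0j}$, even combined with the Jensen contraction property, does \emph{not} yield $L^2$-convergence of their barycentric projections. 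The contraction \eqref{Eq:contraction} is with respect to the fiber-wise distance $W_{P_0;\lambda}$, built from couplings sharing the first marginal, and this ``vertical'' distance is strictly stronger than narrow or $W_2$ convergence of the plans as measures on $\mathbb{R}^d\times\mathbb{R}^d$; Villani's stability theorem only supplies the latter.

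To see the failure concretely, take $P_0$ uniform on $[0,1]$, $T_n(x)=\operatorname{sign}(\sin(2\pi nx))$, and $\gamma_n=(\id,T_n)_{\#}P_0$. Then $\gamma_n\to\gamma\coloneqq P_0\otimes\tfrac12(\delta_{-1}+\delta_{1})$ in $W_2$ (all plans even share the first marginal $P_0$), yet $b_{\gamma_n}=T_n$ satisfies $\norm{b_{\gamma_n}-b_{\gamma}}_{L^2(P_0)}=1$ for all $n$, since $b_\gamma\equiv 0$: conditional expectation is simply not continuous under weak convergence of the joint law. These $\gamma_n$ are of course not optimal, and that is exactly the point --- the step can only be rescued by exploiting optimality, which your sketch never does. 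This is the missing ingredient the paper supplies: by Theorem 12.4.4 of \citet{ambrosio_gradient_2008}, the barycentric projection $\widehat b_{0j}$ of an \emph{optimal} plan is the gradient of a convex function, i.e.\ the optimal transport map from $\prob_{N_0}$ onto its own pushforward $(\widehat b_{0j})_{\#}\prob_{N_0}$; one can then invoke stability of optimal \emph{maps} (the paper uses Proposition 6 of \citet{zemel2019frechet}) to get locally uniform convergence, and upgrade to the $L^2(\prob_{N_0})$ statement using the uniform integrability you already extracted, plus an identification argument showing the limit map equals $b_{\gamma_{0j}}$ $P_0$-a.e. Your Cauchy--Schwarz splitting of $\widehat A_{N,jk}$ and the law-of-large-numbers term are fine once this convergence is in hand, and your subsequence-principle remark about non-unique plans for general $P_0$ is a reasonable (conditional) patch of an issue the paper itself glosses over; but as written, the pivotal convergence of barycentric projections rests on an implication that a one-line counterexample defeats.
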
 
This consistency result directly implies consistency of the optimal weights in case the optimal transport problems between $\prob_{N_0}$ and each $\prob_{N_j}$ are achieved by optimal transport maps $\nabla\widehat{\varphi}_{N_j}$. 
Based on this we also have a consistency result for the empirical counterparts $\widetilde{\prob}_{\pi,N}$ of the optimal projection $\widetilde{P}_\pi$. 
\begin{corollary}[Consistency of the optimal projections]\label{cor:consistency}
In the setting of Proposition \ref{prop:consistency}, the estimated projections $\widetilde{\prob}_{\pi,N}$ converge weakly in probability to the projection $\widetilde{P}_\pi$ as $N_j\to\infty$ for all $j\in\llbracket J\rrbracket$. 
\end{corollary}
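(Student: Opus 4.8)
The plan is to reduce the weak convergence of measures to convergence of the underlying transport maps plus an application of the scalar strong law. Write the two projections explicitly as pushforwards: with $T_N\coloneqq\sum_{j=1}^J \widehat\lambda_{N_j}^*\,\widehat b_{0j}$ and $T\coloneqq\sum_{j=1}^J \lambda_j^*\,b_{\gamma_{0j}}$, Proposition~\ref{Prop:Solution_Characterization_general} and its empirical analogue give $\widetilde P_\pi=(T)_\#P_0$ and $\widetilde{\prob}_{\pi,N}=(T_N)_\#\prob_{N_0}$. Since weak convergence on $\mathbb R^d$ is metrized by testing against bounded Lipschitz functions, it suffices to show that $\int f\,d\widetilde{\prob}_{\pi,N}\to\int f\,d\widetilde P_\pi$ in probability for every bounded Lipschitz $f$; the upgrade to weak-convergence-in-probability of the measures then follows from the standard subsequence principle, extracting along an arbitrary subsequence a further subsequence on which, by diagonalization over a countable convergence-determining class, the integrals converge almost surely simultaneously, so that the measures converge weakly a.s.\ to the same limit $\widetilde P_\pi$.

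First I would fix a bounded Lipschitz $f$ and split
\[
\int f\,d\widetilde{\prob}_{\pi,N}-\int f\,d\widetilde P_\pi
=\underbrace{\int \big(f\circ T_N-f\circ T\big)\,d\prob_{N_0}}_{(\mathrm I)}
+\underbrace{\int (f\circ T)\,d\big(\prob_{N_0}-P_0\big)}_{(\mathrm{II})}.
\]
Term $(\mathrm{II})$ equals $\tfrac{1}{N_0}\sum_{n}f\big(T(X_{n0})\big)-\int(f\circ T)\,dP_0$; since the $X_{n0}$ are i.i.d.\ $P_0$ and $b_{\gamma_{0j}}$ is defined $P_0$-a.e., the summands $f(T(X_{n0}))$ are i.i.d.\ and bounded, so by the ordinary strong law $(\mathrm{II})\to 0$ almost surely, with the correct limit $\int(f\circ T)\,dP_0=\int f\,d\widetilde P_\pi$ — crucially, no continuity of $T$ is needed. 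For $(\mathrm I)$, the Lipschitz bound and Cauchy–Schwarz give $\lvert(\mathrm I)\rvert\le \mathrm{Lip}(f)\,\lVert T_N-T\rVert_{L^2(\prob_{N_0})}$, so it remains to control this norm.

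Next I would bound $\lVert T_N-T\rVert_{L^2(\prob_{N_0})}$ by a Slutsky-type decomposition into a weight error and a map error:
\[
\Big\lVert\sum_j \widehat\lambda_{N_j}^*\widehat b_{0j}-\sum_j\lambda_j^* b_{\gamma_{0j}}\Big\rVert_{L^2(\prob_{N_0})}
\le \max_j\big\lVert \widehat b_{0j}-b_{\gamma_{0j}}\big\rVert_{L^2(\prob_{N_0})}
+\sum_j\big\lvert \widehat\lambda_{N_j}^*-\lambda_j^*\big\rvert\,\big\lVert b_{\gamma_{0j}}\big\rVert_{L^2(\prob_{N_0})},
\]
using $\widehat\lambda^*_N\in\Delta^J$. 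The weight differences tend to $0$ in probability by Proposition~\ref{prop:consistency}; the factors $\lVert b_{\gamma_{0j}}\rVert_{L^2(\prob_{N_0})}$ are bounded in probability because, by Jensen, $b_{\gamma_{0j}}\in L^2(P_0)$ (as $\int\lvert b_{\gamma_{0j}}\rvert^2\,dP_0\le\int\lvert x_2\rvert^2\,d\gamma_{0j}<\infty$) and $\lVert b_{\gamma_{0j}}\rVert_{L^2(\prob_{N_0})}^2\to\lVert b_{\gamma_{0j}}\rVert_{L^2(P_0)}^2$ a.s.\ by the scalar strong law; hence the second sum vanishes in probability. The first term equals $\max_j\lVert \widehat b_{0j}-b_{\gamma_{0j}}\rVert_{L^2(\prob_{N_0})}\to 0$ in probability, which is exactly the $L^2(\prob_{N_0})$-consistency of the empirical barycentric projections established within the proof of Proposition~\ref{prop:consistency}. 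Combining, $\lVert T_N-T\rVert_{L^2(\prob_{N_0})}\to0$ in probability, so $(\mathrm I)\to0$ in probability, and together with $(\mathrm{II})\to0$ a.s.\ this yields the integral convergence for each $f$.

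The hard part is the map error, i.e.\ the $L^2(\prob_{N_0})$-consistency $\widehat b_{0j}\to b_{\gamma_{0j}}$ of the empirical barycentric projections evaluated on the empirical grid. This is the genuinely measure-theoretic core: it rests on the stability of the empirical optimal transport plans $\widehat\gamma_{0j}$ under $\prob_{N_0}\Rightarrow P_0$ and $\prob_{N_j}\Rightarrow P_j$, together with continuity of the barycentric-projection operator $\gamma\mapsto b_\gamma$ with respect to plan convergence. I would inherit this ingredient directly from Proposition~\ref{prop:consistency}, whose argument already requires convergence of the associated Gram quantities $\langle \widehat b_{0j}-\id,\widehat b_{0k}-\id\rangle_{L^2(\prob_{N_0})}$; the remainder of the present proof — the Slutsky combination, the two invocations of the scalar strong law, and the subsequence upgrade to weak-convergence-in-probability — is then routine.
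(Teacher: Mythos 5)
Your proof is correct in outline and takes a genuinely different, more elementary route than the paper's. The paper's proof stays entirely inside the Ambrosio--Gigli--Savar\'e (AGS) framework: it invokes the ``strong $L^2$-convergence'' of $\sum_j\widehat{\lambda}^*_{N_j}\widehat{b}_{0j}-\id$ established (for \emph{fixed} $\lambda$) in the proof of Proposition \ref{prop:consistency}, applies AGS Theorem 5.4.4(iii) to pass to limits of integrals of quadratic-growth test functions --- in particular bounded continuous ones, which yields weak convergence of the pushforwards --- and then upgrades to an in-probability statement via the extended continuous mapping theorem of van der Vaart and Wellner after metrizing weak convergence. You instead test against bounded Lipschitz functions, split the error into a map/weight term $(\mathrm{I})$ and a sampling term $(\mathrm{II})$, dispose of $(\mathrm{II})$ with the scalar strong law (correctly noting that no continuity of $T$ is needed there), reduce $(\mathrm{I})$ to $\|T_N-T\|_{L^2(\prob_{N_0})}$ by the Lipschitz bound plus Cauchy--Schwarz, and handle the random weights by an explicit Slutsky decomposition before a routine subsequence upgrade. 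Your route is more self-contained, and on one point more careful than the paper: the paper's opening sentence silently extends the fixed-$\lambda$ strong convergence of Proposition \ref{prop:consistency} to the random weights $\widehat{\lambda}^*_N$, which is precisely the step your Slutsky decomposition makes explicit.

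One caveat deserves mention. The ingredient you describe as ``exactly'' established within the proof of Proposition \ref{prop:consistency} --- the mixed-norm consistency $\|\widehat{b}_{0j}-b_{\gamma_{0j}}\|_{L^2(\prob_{N_0})}\to 0$ --- is not literally proved there. That proof establishes (a) local uniform convergence of $\widehat{b}_{0j}$ to the optimal map $v_j=b_{0j}$ on compacta, and (b) norm convergence $\|\widehat{b}_{0j}\|_{L^2(\prob_{N_0})}\to\|b_{0j}\|_{L^2(P_0)}$, i.e.\ strong $L^2$-convergence in the sense of AGS Definition 5.4.3. The mixed norm compares the empirical and population maps under the \emph{empirical} measure, and its cross term $\langle\widehat{b}_{0j},b_{0j}\rangle_{L^2(\prob_{N_0})}$ is not among the Gram entries controlled by the objective-function convergence you cite; it requires a short additional truncation/uniform-integrability argument (restrict to a compact $K$ where (a) applies, control the tails using (b) together with the strong law applied to $\int_{K^c}|b_{0j}|^2\dif\prob_{N_0}$), which also uses that $b_{0j}$ agrees $P_0$-a.e.\ with a continuous map, namely a gradient of a convex function. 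This gap is fillable --- it is essentially the content of the AGS Theorem 5.4.4(iii) step that the paper itself invokes --- but as written your attribution overstates what Proposition \ref{prop:consistency} hands you.
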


Proposition \ref{prop:consistency} and Corollary \ref{cor:consistency} hold in all generality and without any assumptions on the corresponding measures $P_j$, except that they possess finite second moments. To get stronger results, for instance parametric rates of convergences or even asymptotic Gaussianity, one needs to make strong regularity assumptions on the measures $P_j$. Without these, the rate of convergence of optimal transport maps in terms of expected square loss is as slow as $n^{-2/d}$ \citep{hutter2021minimax}. Under such additional regularity conditions, the results for the asymptotic properties are standard, because the proposed method reduces to a classical semiparametric estimation problem, as the weights $\lambda_j$ are finite-dimensional. In particular, the setting is that of a MINPIN estimator, as defined in \citet{andrews1994asymptotics}. 

The fact that the actual tangential projection is a simple constrained linear regression for the weights $\lambda$ implies that the regularity of $b_j$ is what drives the statistical properties of the weights $\lambda$. For instance, if the barycentric projections $b_j$ are regular in the sense that they lie in a Donsker class for their given dimension $d$ \citep{wellner2013weak} and converge to their population counterpart at at least the rate of $n^{1/4}$, then the optimal weights will converge at the parametric rate, which follows directly from the arguments in  \cite{andrews1989asymptotics, andrews1994asymptotics}. In the regular setting, i.e.~when the target $P_0$ is a regular measure so that $b_j=\nabla\varphi_j$ for some convex functions $\varphi_j$, such regularity conditions can be derived from classical regularity theory \citep{caffarelli1990interior, caffarelli1992regularity, de2013w}, and have been used in deriving rates of convergences of optimal transport maps by \cite{deb2021rates, forrow2019statistical, gunsilius2021convergence, hutter2021minimax, manole2021plugin, weed2019sharp}. The same holds for estimators that use barycentric projections after solving an entropy-regularized analogue to the optimal transport problem \citep{seguy2018large, pooladian_entropic_2021}.  Without these regularity assumptions, the curse of dimensionality outlined in these statistical results implies that the rate of convergence for the optimal weights will in general be slower than the parametric rate, especially in higher dimensions.


\section{Simulation and Application}
In this section, we provide some simulations and an application to the synthetic controls estimator to demonstrate the computational properties of the method. We use the \texttt{POT} package \citep{flamary2021pot} to obtain the optimal transport plans, and \texttt{CVXPY} \citep{diamond2016cvxpy, agrawal2018rewriting} to compute \eqref{Eq:Barycenter_Solving_Simple_Method}. Additional details of our applications are contained in Appendix \ref{Appendix:additional-details-application}.

\subsection{Simulation: Gaussian Distributions}

We apply our estimator to Gaussian distributions in dimension $d = 10$. We draw from the following Gaussians:
    \[\mathbf{X}_j \sim \Normal \left( \mu_j, \Sigma \right), \quad j=0,1,2, 3 ~, \]
where $\mu_0 = [10, 10, \dots, 10]$, $\mu_1 = [50, 50, \dots, 50]$, $\mu_2 = [200, 200, \dots, 200]$, $\mu_3 = [-50, -50, \dots, -50]$ and $\Sigma = \id_{10}+ 0.8 \id^{-}_{10}$, with $\id_{10}^{-}$ the $10\times 10$ matrix with zeros on the main diagonal and ones on all off-diagonal terms. We take $\mathbf{X}_0$ as target, and $\mathbf{X}_1$, $\mathbf{X}_2$, $\mathbf{X}_3$ as controls. The optimal weights are $\lambda^* =[0.3643, 0.0943, 0.5414]$, meaning $\mathbf X_1$ and $\mathbf X_3$ receive substantial weights, while $\mathbf X_2$ only receives a small amount. This weight distribution can be understood by looking at the differences between $\mu_0$ and $\mu_1$, $\mu_2$ and $\mu_3$, separately---which, in this case, is based on the distance between the means of these distributions, since we work with Gaussian distributions of the same variance. The mean of $\mathbf X_2$ is significantly further away from the mean of the target than the other two means, so it is to be expected that $\mathbf X_2$ receives little weight. Table \ref{tab:mean-gaussian} suggests the projection is close to the target distribution when only considering the mean, despite only having three control units.

\begin{table}[H]
    \centering
    \footnotesize
    \begin{tabular}{|c|c|c|c|c|c|c|c|c|c|c|}
        \hline
        $d$ & 1 & 2 & 3 & 4 & 5 & 6 & 7 & 8 & 9 & 10 \\
        \hline
        mean$(\mathbf X_0)$ & 10.011 & 9.998 & 10.018 & 10.000 & 10.016 & 9.990 & 10.007 & 10.002 & 10.006 & 10.002 \\
        \hline
        mean$(\sum_{j=1}^{3} \lambda_j^* \mathbf X_j)$ & 10.006 & 10.001 & 10.006 & 10.001 & 10.007 & 10.008 & 10.002 & 10.003 & 10.005 & 10.001 \\
        \hline
    \end{tabular}
    \caption{Means of target and optimally-weighted controls $\mathbf X_1$, $\mathbf X_2$, $\mathbf X_3$.}
    \label{tab:mean-gaussian}
\end{table}


\subsection{Simulation: Replicating Images}

To demonstrate the property that our method provides sparse weights in general, we provide an application on replicating a target image of an object using images of the same object taken from different angles. We use the Lego Bricks dataset available from \texttt{Kaggle}, which contains approximately 12,700 images of 16 different Lego bricks. These images are in the RGBA format, despite being grayscale, which provides a good resolution. For our application, the target image is contained in Figure \ref{fig:lego}\textsc{(A)}, and the control images are in Figure \ref{fig:lego_controls}. 

The result, as shown in Figure \ref{fig:lego}, indicates the optimally-weighted projection replicates the target image well, even in a setting with only a few controls. We note two interesting findings. One, only the control images in the first row of Figure \ref{fig:lego_controls} received nontrivial weights; all others received essentially zero weights. This suggests that our method use most information from controls which look sufficiently like the target and demonstrates that our method provides sparse weights. Two, even with few controls, the optimally weighted projection approximates the target well in this application.

\begin{figure}[H]
    \centering
    \begin{subfigure}{0.25\textwidth}
         \centering
         \includegraphics[width=\textwidth]{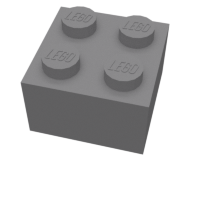}
         \caption{Target block}
         \label{fig:target}
    \end{subfigure}
    \begin{subfigure}{0.25\textwidth}
         \centering
         \includegraphics[width=\textwidth]{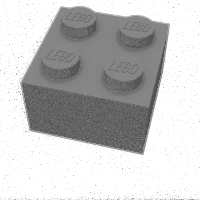}
         \caption{Projection}
         \label{fig:empstat}
    \end{subfigure}
    \caption{Target Lego block and projection.}
    \label{fig:lego}
\end{figure}

\begin{figure}[H]
\captionsetup[subfigure]{labelformat=empty}
\captionsetup[subfigure]{justification=centering}
    \centering
    \begin{subfigure}{0.17\textwidth}
         \centering
         \includegraphics[width=\textwidth]{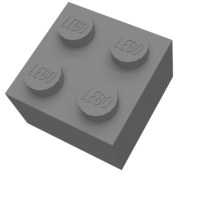}
         \caption{0.059}
    \end{subfigure}
    \begin{subfigure}{0.17\textwidth}
         \centering
         \includegraphics[width=\textwidth]{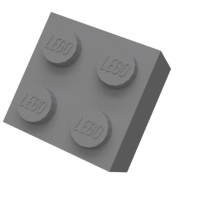}
         \caption{0.313}
    \end{subfigure}
    \begin{subfigure}{0.17\textwidth}
         \centering
         \includegraphics[width=\textwidth]{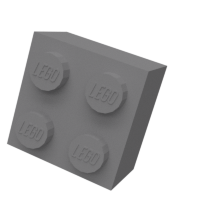}
         \caption{0.077}
    \end{subfigure}
    \begin{subfigure}{0.17\textwidth}
         \centering
         \includegraphics[width=\textwidth]{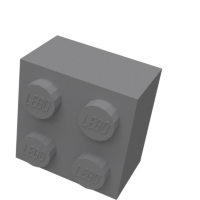}
         \caption{0.083}
    \end{subfigure}
    \begin{subfigure}{0.17\textwidth}
         \centering
         \includegraphics[width=\textwidth]{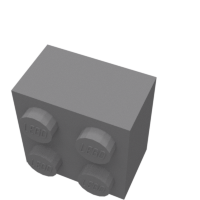}
         \caption{0.468}
    \end{subfigure}

    \begin{subfigure}{0.17\textwidth}
         \centering
         \includegraphics[width=\textwidth]{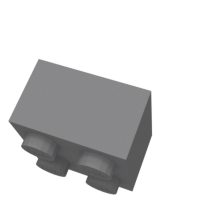}
         \caption{0}
    \end{subfigure}\begin{subfigure}{0.17\textwidth}
         \centering
         \includegraphics[width=\textwidth]{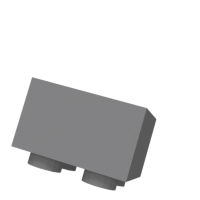}
         \caption{0}
    \end{subfigure}\begin{subfigure}{0.17\textwidth}
         \centering
         \includegraphics[width=\textwidth]{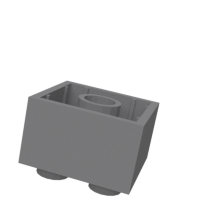}
         \caption{0}
    \end{subfigure}\begin{subfigure}{0.17\textwidth}
         \centering
         \includegraphics[width=\textwidth]{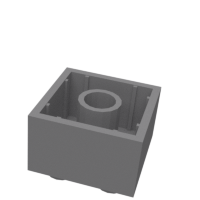}
         \caption{0}
    \end{subfigure}\begin{subfigure}{0.17\textwidth}
         \centering
         \includegraphics[width=\textwidth]{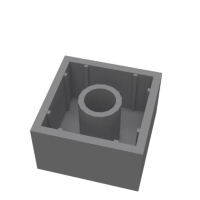}
         \caption{0}
    \end{subfigure}
    \caption{Control units used in simulation and their respective weights.}
    \label{fig:lego_controls}
\end{figure}

\subsection{Application: Medicaid expansion}
%
%
\label{sec:medicaid}
In this section, we apply the method to extend the classical notion of synthetic controls \citep{abadie2003economic, abadie2010synthetic, abadie2021using} and its generalization to univariate distributions \citep{gunsilius_distributional_2021} to general multivariate outcome distributions. Moreover, this generalization allows to estimate one set of optimal weights over all time periods jointly, while existing methods need to apply the method in every time period before averaging. This removes sparsity in the optimal weights estimated. For more details, we refer to \citet{abadie2021using} and \citet{gunsilius_distributional_2021}.

An application to a causal inference setting is studying the effect of health insurance coverage following state-level Medicaid expansion in the United States. A provision within the ACA allows states to decide whether to expand Medicaid for low-income households. Some states decide to adopt such expansions early on, while others did not (and still have not done so). We investigate the economic and behavioral effects of Medicaid expansion. Specifically, we consider some first-order effects (i.e. the extensive margin of Medicaid enrollment post-expansion and disemployment effects) and second-order effects (i.e. income and labor supply effects) of expanded Medicaid access.


The observational data we use is the ACS. From it, we collect the following variables:
\begin{enumerate}
    \item HINSCAID, which indicates whether the individual interviewed is covered by Medicaid, 
    
    \item EMPSTAT, which indicates employment status (either employed or unemployed),
    
    \item UHRSWORK, which indicates number of hours worked,
    
    \item INCWAGE, which indicates previous-twelve-month labor income level.
\end{enumerate}
We measure labor hours supplied and labor income in logs instead of levels. We consider Montana as the treated unit for this application. Montana adopted such expansion in 2016. For control units, we use the twelve states for which such expansion has never occurred. As of 2022, the twelve states are: Alabama, Florida, Georgia, Kansas, Mississippi, North Carolina, South Carolina, South Dakota, Tennessee, Texas, Wisconsin, Wyoming. We use data from 2010 to 2016 to estimate optimal weights for the control states in order to create the ``synthetic Montana'', i.e. Montana had it not adopted Medicaid expansion. Details of sample selection and estimating ``synthetic Montana'' are described in Appendix \ref{Appendix:Details_Medicaid}. As we show in the Appendix, the combination of control states replicating Montana is close to actual Montana in the pre-intervention period, implying that the observed differences in post-intervention periods can be attributed to the causal effect of the Medicaid expansion.

\begin{figure}[H]
    \centering
    \begin{subfigure}{0.47\textwidth}
         \centering
         \includegraphics[width=\textwidth]{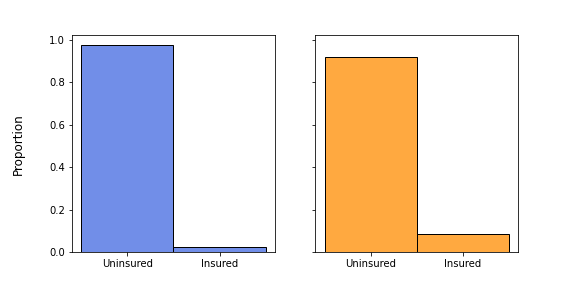}
         \caption{Covered by Medicaid}
         \label{fig:hinscaid}
    \end{subfigure}
    \hfill
    \begin{subfigure}{0.47\textwidth}
         \centering
         \includegraphics[width=\textwidth]{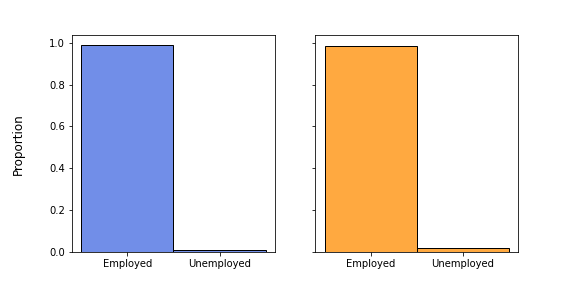}
         \caption{Employment Status}
         \label{fig:empstat-2}
    \end{subfigure}
    \caption{Counterfactual (blue) vs actual (orange) Montana from 2017 to 2020.}
    \label{fig:first-order-results}
\end{figure}

\begin{figure}[H]
    \centering
    \begin{subfigure}{0.47\textwidth}
         \centering
         \includegraphics[width=\textwidth]{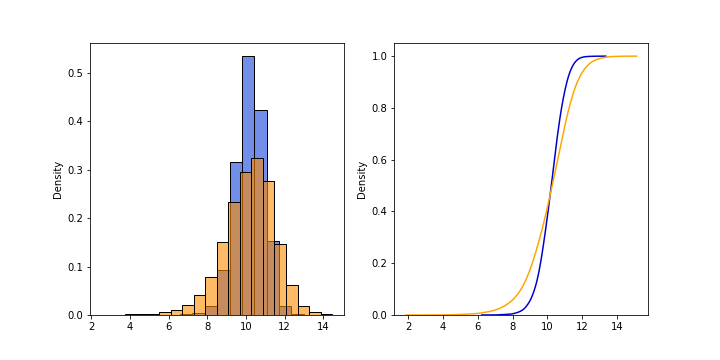}
         \caption{Log Wage}
         \label{fig:hinscaid-2}
    \end{subfigure}
    \hfill
    \begin{subfigure}{0.47\textwidth}
         \centering
         \includegraphics[width=\textwidth]{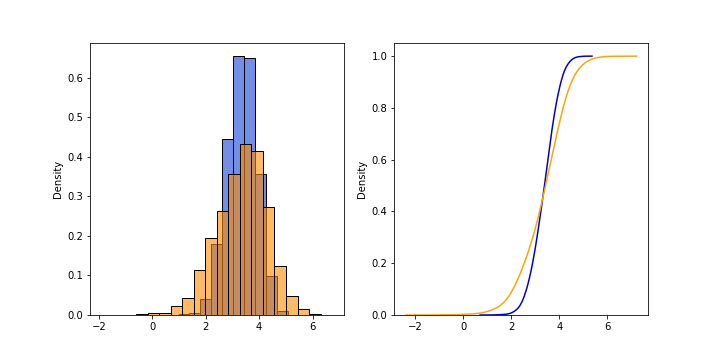}
         \caption{Log Labor Hours Supplied}
         \label{fig:empstat-3}
    \end{subfigure}
    \caption{Counterfactual (blue) vs actual (orange) Montana from 2017 to 2020. In each panel, histograms of data distributions are shown on the left, and cumulative distribution functions are shown on the right.}
    \label{fig:second-order-results}
\end{figure}

\begin{table}[H]
    \centering
    \footnotesize
    \begin{tabular}{|c|c|c|c|c|c|c|c|c|c|c|c|c|}
        \hline
        State & AL & FL & GA & KS & MS & NC & SC & SD & TN & TX & WI & WY \\
        \hline
        Weight & 0.184 & 0 & 0 & 0 & 0.174 & 0 & 0.010 & 0.513 & 0 & 0 & 0.119 & 0\\
        \hline
    \end{tabular}
    \caption{Estimated Weights for Control States.}
    \label{tab:synthetic-weights}
\end{table}

Consistent with findings in \cite{courtemanche2017early, mazurenko2018effects}, we find significant first-order effects of Medicaid expansion, which are summarized in Figure \ref{fig:first-order-results}. We note that the ``synthetic Montana'' has much lower proportion of individuals insured under Medicaid, suggesting that expanding Medicaid eligibility directly affects the extensive margin of Medicaid enrollment. The disemployment effect is much less pronounced in comparison to the enrollment effect we estimated, but nonetheless positive and nontrivial; this is consistent with the findings in, e.g., \cite{peng2020effects}, but inconsistent with those in, e.g., \cite{gooptu2016medicaid}.

We also find nontrivial, positive second-order effects, summarized in Figure \ref{fig:second-order-results}. In both earnings and labor hours supplied, we see the ``synthetic Montana'' has lower averages, and narrower supports compared to the observed distributions. This suggests Medicaid expansion improves earnings, and widens the intensive margin of labor supply.

We further note two crucial findings from our application. One, the optimal weights estimated here is, again, sparse. Based on our results, summarized in Table \ref{tab:synthetic-weights}, we observed only five control states---Alabama, Mississippi, South Carolina, South Dakota, and Wisconsin---with nonzero weights. South Dakota alone constitutes over half of the total weight, suggesting it best approximates Montana compared to all other control states. Two, we estimated these weights using data from all years covered in the pre-intervention period, which provides one sparse set of weights over all time periods. This stands in contrast to the standard synthetic controls method \citep{abadie2003economic,abadie2010synthetic}, where the optimal weights are obtained from taking some weighted average of weights estimated in every time unit during the pre-intervention period; this averaging over weights in each time period generates a non-sparse weight. 



\section{Conclusion}

We have developed a projection method between sets of probability measures supported on $\mathbb{R}^d$ based on the tangential structure of the 2-Wasserstein space. Our method seeks to best approximate some target distribution that is potentially multivariate, using some chosen set of control distributions. We provide an implementation which gives unique, interpretable weights in a setting of regular probability measures. For general probability measures, we construct our projection by first creating a regular tangent space through applying barycentric projection to optimal transport plans. Our application to evaluating the first- and second-order effects of Medicaid expansion in Montana via an extension of the synthetic controls estimator demonstrates the method's efficiency and the necessity to have a method that is applicable for general proabbility measures. The approach still works without restricting optimal weights to be in the unit simplex, which would allow for extrapolation beyond the convex hull of the control units, providing a notion of tangential regression. It can also be extended to a continuum of measures, using established consistency results of barycenters \citep[e.g.][]{le2017existence}.


\bibliographystyle{apalike}
\bibliography{DSC_bib}

\newpage

\appendix 

\section{Proofs from the main text}

\begin{proof}[Proof of \propref{Prop:Solution_Characterization}]
   Define the following closed and convex subset $\mathcal{C}\subset L^2(P_0)$ for fixed optimal transportation maps between $P_0$ and $P_j$, denoted $\nabla\varphi_j$:
   \begin{align*}
       \mathcal{C} \coloneqq \condset{f \in L^2 (P_0)}{f = \sum_{j=1}^{J} \lambda_j \grad \phi_j}{ \text{ for some } }{ \lambda \in \Delta^{J}}.
   \end{align*}
   Recall that the transport maps $\nabla\varphi_j$ exist since $P_0$ is regular.
    Using $\mathcal{C}$, we can rewrite \eqref{Eq:Barycenter_Solving_Simple_Method} as
   \begin{align*}
       \underset{\lambda \in \Delta^{J}}{\argmin} \norm{ \sum_{j=1}^{J} \lambda_j \grad \phi_j - \id }_{L^2 (P_0)}^2 &=
       \underset{f \in \mathcal{C}}{\argmin} \norm{ f - \id }_{L^2 (P_0)}^2,
   \end{align*}
   which by definition is the metric projection of $\id$ onto $\mathcal{C}$. Since $\mathcal{C}$ is a non-empty closed and convex subset of the Hilbert space $L^2(P_0)$, this metric projection exists and is unique \citep[Theorem 6.53]{aliprantis_infinite_1999}. Moreover, if $\id \in \mathcal{C}$, then $\pi_{\mathcal{C}} = \id$; otherwise, $\pi_{\mathcal{C}} \in \partial \mathcal{C}$, where $\partial\mathcal{C}$ is the boundary of $\mathcal{C}$ \citep[Lemma 6.54]{aliprantis_infinite_1999}. 
   
   Since $P_0$ is regular, the exponential map is continuous. In fact, for every $j \neq k$,
   \begin{align*}
       W_2^2(P_j, P_{k}) = W_2^2((\grad \phi_j)_{\#} P_0, (\grad \phi_{k})_{\#} P_0) &\leq \int_{\mathbb{R}^d} \abs{\grad \phi_j - \grad \phi_{k}}^2 \dif P_0 (x).
   \end{align*}
   In other words, the distance between $P_j$ and $P_{k}$ in $\wasserstein_2 (\mathbb{R}^d)$ is smaller than that between corresponding elements $\nabla\varphi_j,\nabla\varphi_k$ in the tangent space. This implies continuity of the exponential map.
   
   Furthermore, in this regular setting, the exponential map sends convex sets in $\tangent_{P_0} \wasserstein_2$ to generalized geodesically convex sets in $\wasserstein_2$. Mechanically, for any two (scaled) elements $t(\nabla\varphi_j-\id)$ and $s(\nabla\varphi_k-\id)$ in $\mathcal{T}_{P_0}\wasserstein_2$, and any $\rho \in [0,1]$,
   \begin{align*}
       &\exp_{P_0} (\rho t ( \grad \phi_j - \id) + (1-\rho) s ( \grad \phi_{k} - \id)) \\
       =& \exp_{P_0} ( (\rho t \grad \phi_j + (1-\rho) s \grad \phi_{k}) - (\rho t + (1-\rho) s) \id) \\
       =& \exp_{P_0} \left( \widetilde{\rho} \left[ \left[ \frac{\rho t}{\widetilde{\rho}} \grad \phi_j + \frac{(1-\rho) s}{\widetilde{\rho}} \grad \phi_{k}  \right] - \id \right] \right) \\
       =& \left( \left[ \rho t \grad \phi_j + (1-\rho) s \grad \phi_{k} \right] + (1-\widetilde{\rho}) \id \right)_{\#} P_0 \\
       =& \left( \left[ \rho t ( \grad \phi_j - \id) + (1-\rho) s ( \grad \phi_{k} - \id) \right] + \id \right)_{\#} P_0
   \end{align*}
   where $\widetilde{\rho} \coloneqq \rho t + (1-\rho) s$. This is a generalized geodesic connecting $P_j$ and $P_{k}$, via the optimal transport map between them and $P_0$ \citep[section 9.2]{ambrosio_gradient_2008}. The same argument holds when extending generalized geodesics to generalized barycenters by taking convex combination of more measures than a binary interpolation with respect to $\rho$. Mechanically, for any $\lambda \in \Delta^{J}$ and $t_j > 0$ for all $j \in \llbracket J \rrbracket$,
   \begin{align*}
       &\exp_{P_0} \left( \sum_{j=1}^{J} \lambda_j t_j ( \grad \phi_j - \id) \right) = \exp_{P_0} \left( \sum_{j=1}^{J} \lambda_j t_j \grad \phi_j - \sum_{j=1}^{J} \lambda_j t_j \id \right) \\
       =& \exp_{P_0} \left( \widetilde{\rho}_J \left[ \sum_{j=1}^{J} \widetilde{\rho}_J \nabla \phi_j - \id \right] \right) \\
       =& \left( \left[ \sum_{j=1}^{J} \lambda_j t_j \nabla \phi_j \right] + (1 - \widetilde{\rho}_J) \id \right)_{\#} P_0 \\
       =& \left( \left[ \sum_{j=1}^{J} \lambda_j t_j (\nabla \phi_j - \id) \right] + \id \right)_{\#} P_0
   \end{align*}
   where $\widetilde{\rho}_J \coloneqq \sum_{j=1}^{J} \lambda_j t_j$. This proves the exponential map is generalized geodesically convex.
   

   From above it follows that $P_{\pi} \coloneqq \exp_{P_0} (\pi_C)$ is either in the interior of $\mathcal{C}$, which is the case if $\id\in \mathcal{C}$, or on its boundary: since $\pi_C \in \partial C$, $\exp_{P_0} (\pi_C) \in \exp_{P_0} (\partial C)$. By continuity of the exponential map it follows that $\exp_{P_0} (\partial C) = \partial \exp_{P_0} (C)$. 
   Combining all steps above show that $P_{\pi}$ is a \textit{geodesic} metric projection of $P_0$ onto the geodesic convex hull of $\set{P_j}_{j=1}^{J}$.
\end{proof}

\begin{proof}[Proof of \propref{Prop:Solution_Characterization_general}]
    The result follows from the same argument as the proof of Proposition \ref{Prop:Solution_Characterization}. Theorem 12.4.4 in \citet{ambrosio_gradient_2008} shows that $\mathcal{T}_{P_0}\wasserstein_2$ is the image of the barycentric projection of measures in the general tangent cone: $b_\gamma(x)$ is an optimal transport map if $\gamma$ is an optimal transport plan. But the exponential map satisfies 
    \[\exp_{P_0}(v) = \left(v+\id\right)_\# P_0\qquad\text{for all $v\in\mathcal{T}_{P_0}\wasserstein_2$.}\] This implies that \[\widetilde{P}_\pi\coloneqq \exp_{P_0}\left(\sum_{j=1}^J \lambda_j^*b_{\gamma_{0j}}-\id\right) = \left(\sum_{j=1}^J \lambda_j^*b_{\gamma_{0j}}\right)_{\raisebox{12pt}{$\scaleobj{0.75}{\#}$}} P_0\in \widetilde{\convexhull}_{P_0}\left(\left\{P_j\right\}_{j=1}^J\right) ~, \]
    since the convex combination of elements in the subgradients of convex functions lie in the subgradient of a convex function (provided the subgradient of each convex function is nonempty, which is the case here). Then the continuity and generalized convexity of the exponential map for elements in the regular tangent space $\mathcal{T}_{P_0}\wasserstein_2$ implies the result.
\end{proof}

\begin{proof}[Proof of Proposition \ref{prop:consistency}]
We split the proof into two parts. In the first part we prove the convergence in probability of the family of objective functions \eqref{Eq:Barycenter_Solving_gen_emp} to their population counterparts \eqref{Eq:Barycenter_Solving_gen} if the empirical measures $\prob_{N_j}$ converge weakly in probability to the population measures $P_j$. 
In the second step we use the fact that $\widehat{\lambda}^*$ is a classical semiparametric estimator  \citep{andrews1994asymptotics,newey1994large} to derive the convergence of the weights. \\
%

\paragraph{\emph{Step 1: Convergence of the objective functions}}
To show the convergence of the of the objective functions for obtaining the weights $\lambda^*$, we write
\begin{align*}
   & \left\lvert \left\|\sum_{j=1}^J\lambda_jb_{0j}-\id\right\|^2_{L^2(P_0)} - \left\|\sum_{j=1}^J\lambda_j\widehat{b}_{0j}-\id\right\|^2_{L^2(\prob_{N_0})}\right\rvert\\
   =& \left\lvert \int \left\lvert \sum_{j=1}^J\lambda_jb_{0j}(x)-x \right\rvert^2 \dif P_0 - \int \left\lvert \sum_{j=1}^J\lambda_j\widehat{b}_{0j}(x)-x \right\rvert^2 \dif \prob_{N_0}\right\rvert.
\end{align*}
We hence want to show that
\[\lim_{\bigwedge_j N_j\to \infty} \left\lvert \int \left\lvert \sum_{j=1}^J\lambda_jb_{0j}(x)-x \right\rvert^2 \dif P_0(x) - \int \left\lvert \sum_{j=1}^J\lambda_j\widehat{b}_{0j}(x)-x \right\rvert^2 \dif \prob_{N_0}(x)\right\rvert = 0 ~, \] where 
$\bigwedge_j N_j \equiv \min\left\{N_0,\ldots,N_J\right\}$.

We split the result into two parts. The first part shows that 
\[\liminf_{\bigwedge_jN_j\to\infty}\int_{\mathbb{R}^d} \left\lvert \sum_{j=1}^J\lambda_j\widehat{b}_{0j}(x_0)-x_0 \right\rvert^2\dif\prob_{N_0}(x_0) \geq \int_{\mathbb{R}^d} \left\lvert \sum_{j=1}^J\lambda_jb_{0j}(x_0)-x_0 \right\rvert^2\dif P_0(x_0).\]
In the second part we use the $L^2(P_0)$ convergence of the barycentric projections to prove that the limit exists and coincides with the limit inferior.

For the first part, we have 
\begin{multline*}
    \liminf_{\bigwedge_jN_j\to\infty}\int_{\mathbb{R}^d} \left\lvert \sum_{j=1}^J\lambda_j\widehat{b}_{0j}(x_0)-x_0 \right\rvert^2\dif\prob_{N_0}(x_0)= 
    \liminf_{\bigwedge_jN_j\to\infty}\int_{(\mathbb{R}^d)^{J+1}} \left\lvert \sum_{j=1}^J\lambda_jx_j-x_0 \right\rvert^2\dif\bm{\widehat{\gamma}}_N(x_0,x_1,\ldots,x_J),
\end{multline*}
where $\widehat{\bm{\gamma}}_N(x_0,x_1,\ldots,x_J)$ is a measure that solves
\[\min\left\{\int_{(\mathbb{R}^d)^{J+1}} \sum_{j=1}^J\lambda_j \left\lvert x_j-x_0\right\rvert^2\dif \bm{\gamma}: \bm{\gamma}\in\Gamma_1(\widehat{\gamma}_{01},\ldots,\widehat{\gamma}_{0J})\right\} ~, \]
$\widehat{\gamma}_{0j}$ are the optimal couplings between $\prob_{N_0}$ and $\widetilde{\prob}_{N_j}\coloneqq \left(\widehat{b}_{0j}\right)_\#\prob_{N_0}$. Since all measures are defined on the complete and separable space $\mathbb{R}^d$, and by assumption of finite second moments, i.e.
\[\max_{j\in\llbracket J\rrbracket}\sup_{N_j} \int \left\lvert x_j-x_0 \right\rvert^2\dif\widehat{\gamma}_{0j} <+\infty ~, \] it holds that each sequence $\widehat{\gamma}_{0j}$ is tight by Ulam's theorem \citep[Theorem 7.1.4]{dudley2018real}. Using the fact that $\lambda\in\Delta^{J}$ and $\widehat{\bm{\gamma}}_N\in\Gamma_1\left(\widehat{\gamma}_{01},\ldots,\widehat{\gamma}_{0J}\right)$, applying Jensen's inequality gives us
\[\max_{j\in\llbracket J\rrbracket} \sup_{N_j} \int_{\left(R^d\right)^{J+1}} \left\lvert\sum_{j=1}^J\lambda_j x_j-x_0\right\rvert^2\dif\widehat{\bm{\gamma}}_N \leq \max_{j\in\llbracket J\rrbracket} \sup_{N_j}\sum_{j=1}^J\lambda_j\int_{\mathbb{R}^d}\left\lvert x_j-x_0\right\rvert^2 \dif\widehat{\gamma}_{0j}<+\infty ~, \] which implies that $\widehat{\bm{\gamma}}_N$ is tight. 
By Prokhorov's theorem, there exists a subsequence $\widehat{\bm{\gamma}}_{N_k}$ that weakly converges to a limit measure $\bm{\gamma}$. Therefore, by the continuity of the map $(x_0,x_j) \mapsto \sum_{j}\lambda_jx_j-x_0$, it follows from classical convergence results \citep[Lemma 5.1.12(d)]{ambrosio_gradient_2008} that 
\begin{multline*}
    \liminf_{\bigwedge_jN_j\to\infty}\int_{(\mathbb{R}^d)^{J+1}} \left\lvert \sum_{j=1}^J\lambda_jx_j-x_0 \right\rvert^2\dif\bm{\widehat{\gamma}}_N(x_0,x_1,\ldots,x_J)=
     \int_{(\mathbb{R}^d)^{J+1}} \left\lvert \sum_{j=1}^J\lambda_jx_j-x_0 \right\rvert^2\dif\bm{\gamma}(x_0,\ldots,x_J).
\end{multline*}
Furthermore, by the same argument via Jensen's inequality, i.e., 
\[ \int_{(\mathbb{R}^d)^{J+1}} \left\lvert \sum_{j=1}^J\lambda_jx_j-x_0 \right\rvert^2\dif\bm{\gamma}(x_0,\ldots,x_J) \leq \sum_{j=1}^J \int_{(\mathbb{R}^d)^{2}}  \left\lvert\lambda_jx_j-x_0 \right\rvert^2\dif\gamma_{0j}(x_0,x_j) < + \infty ~, \] it follows that the limit $\bm{\gamma}\in \Gamma_1\left(\gamma_{01},\ldots,\gamma_{0J}\right)$.

Now note that by the definition of disintegration it follows that \citep[Lemma 5.3.2]{ambrosio_gradient_2008} \[\bm{\gamma}\in\Gamma_1(\gamma_{01},\ldots,\gamma_{0J})\qquad \Longleftrightarrow\qquad \bm{\gamma}_{x_0}\in \Gamma\left(\gamma_{1|x_0},\ldots,\gamma_{J|x_0}\right) ~, \]
where 
\[\bm{\gamma} = \int \bm{\gamma}_{x_0}\dif P_0(x_0)\qquad\text{and}\qquad \gamma_{0j} = \int \gamma_{j|x_0}\dif P_0(x_0)\] are the disintegrations of $\bm{\gamma}$ and $\gamma_{0j}$ with respect to $P_0$, respectively. Therefore, we have
\begin{align*}
    & \int_{(\mathbb{R}^d)^{J+1}} \left\lvert \sum_{j=1}^J\lambda_jx_j-x_0 \right\rvert^2\dif\bm{\gamma}(x_0,\ldots,x_J)\\
    =& \int_{\mathbb{R}^d}\int_{\left(\mathbb{R}^d\right)^J} \left\lvert \sum_{j=1}^J\lambda_jx_j-x_0 \right\rvert^2\dif\bm{\gamma}_{x_0}(x_1,\ldots,x_J)\dif P_0(x_0)\\
    \geq&\int_{\mathbb{R}^d}\left\lvert\int_{\left(\mathbb{R}^d\right)^J}\left(\sum_{j=1}^J\lambda_j x_j-x_0\right)\dif\bm{\gamma}_{x_0}(x_1,\ldots,x_J)\right\rvert^2\dif P_0(x_0)\\
    =&\int_{\mathbb{R}^d}\left\lvert \sum_{j=1}^J\lambda_j\int_{\left(\mathbb{R}^d\right)^J}x_j\dif\bm{\gamma}_{x_0}(x_1,\ldots,x_J)-x_0\right\rvert^2\dif P_0(x_0)\\
    =&\int_{\mathbb{R}^d}\left\lvert\sum_{j=1}^J\lambda_j\int_{\mathbb{R}^d}x_j\dif\gamma_{j|x_0}(x_j)-x_0\right\rvert^2\dif P_0(x_0)\\
    =&\int_{\mathbb{R}^d}\left\lvert\sum_{j=1}^J\lambda_jb_{0j}(x_0) - x_0\right\rvert^2\dif P_0(x_0),
\end{align*}
where the third lines follows from Jensen's inequality and the fifth line from $\bm{\gamma}_{x_0}\in \Gamma\left(\gamma_{1|x_0},\ldots,\gamma_{J|x_0}\right)$. This shows the first part.

For the second part we use the fact that each barycentric projection $\widehat{b}_{0j}(x_1)$ is an optimal transport map between $\prob_{N_0}$ and $\widetilde{\prob}_{N_j}$ if $\widehat{\gamma}_{0j}$ is an optimal transport plan between $\prob_{N_0}$ and $\prob_{N_j}$, which follows from Theorem 12.4.4 in \citet{ambrosio_gradient_2008}.
As before, we know that $\left(\widehat{b}_{0j}\right)_\#\prob_{N_0}$ is a tight sequence that converges to some $\widetilde{P}_j$. By definition and the fact that $\hat{b}_{0j}$ is the gradient of a convex function between $\prob_{N_0}$ and $\widetilde{\prob}_{N_j}$, $\widehat{b}_{0j}$ is the unique optimal transport map between $\prob_{N_0}$ and $\widetilde{\prob}_{N_j}$ for all $N_j$ and all $j$. Since the measures $P_j$ have finite second moments by assumption, we have
\begin{align*}
    \limsup_{N_0\wedge N_j\to\infty} \int_{\mathbb{R}^d} |x_j|^2\dif \widetilde{\prob}_{N_j} =& \limsup_{N_0\wedge N_j\to\infty} \int_{\mathbb{R}^d} \left\lvert \widehat{b}_{0j}(x_0)\right\rvert^2\dif \prob_{N_0} \\
    =& \limsup_{N_0\wedge N_j\to\infty} \int_{\mathbb{R}^d} \left\lvert \int_{\mathbb{R}^d} x_j\dif \widehat{\gamma}_{j|x_0}(x_j)\right\rvert^2\dif \prob_{N_0}\\
    \leq &\limsup_{N_0\wedge N_j\to\infty}\int_{\left(\mathbb{R}^d\right)^2}\left\lvert x_j\right\rvert^2 \dif \widehat{\gamma}_{0j}(x_0,x_j)\\
    = & \int_{\left(\mathbb{R}^d\right)^2}\left\lvert x_j\right\rvert^2 \dif \gamma_{0j}(x_0,x_j)<+\infty,
\end{align*}
where the last equality follows from the tightness of $\widehat{\gamma}_{0j}$, as shown earlier.
Therefore, by standard stability results for optimal transport maps, in particular Proposition 6 in \citet{zemel2019frechet}, it holds that
$\widehat{b}_{0j}$ converges uniformly on every compact subset $K\subset\mathbb{R}^d$ in the interior of the support of the limit measure $\widetilde{P}_j$, that is
\[\lim_{N_0\wedge N_j\to\infty}\sup_{x_0\in K}\left\lvert\widehat{b}_{0j}(x_0) - v_j(x_0)\right\rvert = 0 ~, \]
where $v_j$ is the optimal transport map between $P_0$ and $\widetilde{P}_j$.
%
%

We now show that $v_j=b_{0j}$ $P_0$-almost everywhere. 
From the local uniform convergence, we can then derive ``strong $L^2$-convergence'' \citep[Definition 5.4.3]{ambrosio_gradient_2008} of the potentials:
\begin{align*}
    &\limsup_{N_0\wedge N_j\to\infty}\left\lvert\left\|\widehat{b}_{0j}\right\|_{L^2(\prob_{N_0})} - \left\|v_j\right\|_{L^2(P_{0})}\right\rvert\\
    \leq & \limsup_{N_0\wedge N_j\to\infty}\left\lvert\left\|\widehat{b}_{0j}\right\|_{L^2(\prob_{N_0})} - \left\|v_j\right\|_{L^2(\prob_{N_0})}\right\rvert + \limsup_{N_0\to\infty}\left\lvert\left\|v_j\right\|_{L^2(\prob_{N_0})} - \left\|v_j\right\|_{L^2(P_{0})}\right\rvert\\
    \leq & \limsup_{N_0\wedge N_j\to\infty}\left\|\widehat{b}_{0j} - v_j\right\|_{L^2(\prob_{N_0})} + \limsup_{N_0\to\infty}\left\lvert\left\|v_j\right\|_{L^2(\prob_{N_0})} - \left\|v_j\right\|_{L^2(P_{0})}\right\rvert
\end{align*}
Now the first term converges to zero by H\"older's inequality and the local uniform convergence of the optimal transport maps from above. The second term satisfies
\begin{align*}
    &\limsup_{N_0\to\infty}\left\lvert\left\|v_j\right\|_{L^2(\prob_{N_0})} - \left\|v_j\right\|_{L^2(P_{0})}\right\rvert \\
    =& \limsup_{N_0\to\infty}\left\lvert \left(\int_{\mathbb{R}^d} \left\lvert v_j(x_0)\right\rvert^2 \dif \prob_{N_0}\right)^{1/2} - \left(\int_{\mathbb{R}^d} \left\lvert v_j(x_0)\right\rvert^2 \dif P_{0}\right)^{1/2}\right\rvert\\
    \leq & \limsup_{N_0\to\infty}\left\lvert \int_{\mathbb{R}^d} \left\lvert v_j(x_0)\right\rvert^2 \dif \prob_{N_0} - \int_{\mathbb{R}^d} \left\lvert v_j(x_0)\right\rvert^2 \dif P_{0}\right\rvert^{1/2}.
\end{align*}
But since $P_0$ has finite second moments, it holds that this term also converges to zero.

Based on this we can show that $\widehat{\gamma}_{0j}\equiv \left(\id,\widehat{b}_{0j}\right)$ converge weakly to $\gamma_{0j}\equiv \left(\id,v_j\right)$. Indeed, if $\gamma_{0j}$ is a limit point of the sequence $\widehat{\gamma}_{0j}$, it holds that
\begin{align*}
    \int_{\left(\mathbb{R}^d\right)^2} \left\lvert x_j\right\rvert^2\dif\gamma_{0j}(x_0,x_j) &\leq \liminf_{N_0\wedge N_j\to\infty} \int_{\left(\mathbb{R}^d\right)^2}\left\lvert x_j\right\rvert^2\dif\widehat{\gamma}_{0j}(x_0,x_j)\\
    & \leq \limsup_{N_0\wedge N_j\to\infty} \int_{\left(\mathbb{R}^d\right)^2}\left\lvert x_j\right\rvert^2\dif\widehat{\gamma}_{0j}(x_0,x_j)\\
    & = \limsup_{N_0\wedge N_j\to\infty} \int_{\mathbb{R}^d}\left\lvert \widehat{b}_{0j}(x_0)\right\rvert^2\dif\prob_{N_0}(x_0)\\
    &=\int_{\mathbb{R}^d} \left\lvert v_j(x_0)\right\rvert^2 \dif P_0(x_0).
\end{align*}
Disintegrating the left-hand side with respect to $P_0$, and applying Jensen's inequality, gives
\begin{align*}
    \int_{\left(\mathbb{R}^d\right)^2} \left\lvert x_j\right\rvert^2\dif\gamma_{0j}(x_0,x_j) &= \int_{\mathbb{R}^d}\int_{\mathbb{R}^d} \left\lvert x_j\right\rvert^2\dif\gamma_{j|x_0}(x_j)\dif P_0(x_0)\\
    &\geq \int_{\mathbb{R}^d}\left\lvert\int_{\mathbb{R}^d}  x_j\dif\gamma_{j|x_0}(x_j)\right\rvert^2\dif P_0(x_0)\\
    & = \int_{\mathbb{R}^d} \left\lvert b_{0j}(x_0)\right\rvert^2 \dif P_0(x_0),
\end{align*}
that is,
\[\int_{\mathbb{R}^d} \left\lvert b_{0j}(x_0)\right\rvert^2 \dif P_0(x_0)\leq \int_{\mathbb{R}^d} \left\lvert v_j(x_0)\right\rvert^2 \dif P_0(x_0).\] But since $v_j$ is an optimal transport map between $P_0$ and $\widetilde{P}_j$ by definition, it holds that
\[
    \int_{\mathbb{R}^d} \left\lvert b_{0j}(x_0)\right\rvert^2 \dif P_0(x_0)\geq \int_{\mathbb{R}^d} \left\lvert v_j(x_0)\right\rvert^2 \dif P_0(x_0) ~,
\]
which implies that equality holds and we have that
\[\int_{\mathbb{R}^d}\left[\left\lvert b_{0j}(x_0)\right\rvert^2 - \left\lvert v_j(x_0)\right\rvert^2\right] \dif P_0(x_0) = 0 ~, \]
which implies that $b_{0j} = v_j$ $P_0$-almost everywhere. We have hence shown that $\left(\id,\widehat{b}_{0j}\right)_\#\prob_{N_0}$ converges weakly to $\left(\id,b_{0j}\right)_\#P_0$ for all $j$, where the barycentric projection $b_{0j}$ is the optimal transport map between $P_0$ and $\widetilde{P}_j$ \citep[e.g.][Theorem 2.12.(iii)]{villani_topics_2003}.

Moreover, we have shown ``strong $L^2$-convergence'' of the barycentric projections in terms of Definition 5.4.3 in \citet{ambrosio_gradient_2008}. Since this holds for all $j$, it also holds for their convex combination for fixed weights $\lambda\in\Delta^{J}$. Putting everything together, we then have that
\[\lim_{\bigwedge_j N_j\to\infty}\left\|\sum_{j=1}^J\lambda_j\widehat{b}_{0j}-\id\right\|^2_{L^2(\prob_{N_0})} = \left\|\sum_{j=1}^J\lambda_j b_{0j}-\id\right\|^2_{L^2(P_0)}.\] Since all observable measures $\prob_j$ are empirical measures, they converge weakly in probability \citep{varadarajan1958convergence}, which implies that
\[\lim_{\bigwedge_jN_j\to\infty} P\left(\left\lvert\left\|\sum_{j=1}^J\lambda_j\widehat{b}_{0j}-\id\right\|^2_{L^2(\prob_{N_0})} - \left\|\sum_{j=1}^J\lambda_j b_{0j}-\id\right\|^2_{L^2(P_0)}\right\rvert>\varepsilon\right) = 0\qquad\text{for all $\varepsilon>0$}.\] This shows convergence in probability of the objective function for fixed $\lambda$. \\

\paragraph{\emph{Step 2: Convergence of the optimal weights $\widehat{\lambda}_N^*$}} The convergence of the optimal weights now follows from standard consistency results in semiparametric estimation. In particular, the objective functions are all convex for any $\lambda\in\mathbb{R}^J$, which implies that they converge uniformly on any compact set \citep[Theorem 10.8]{rockafellar1970convex}, so the objective function converges uniformly on $\Delta^{J}$. Now a standard consistency result like Theorem 2.1 in \citet{newey1994large} then implies that 
\[\lim_{\bigwedge_j N_j\to\infty}P\left(\left\lvert \widehat{\lambda}^*_N - \lambda^*\right\rvert>\varepsilon\right) = 0\qquad\text{for all $\varepsilon>0$} ~, \] which is what we wanted to show. Note that the result can also be shown if we allow the weights $\lambda$ to be negative, i.e., if we only require that $\sum_{j=1}^J\lambda_j = 1$. In this case, the fact that the objective functions are convex and coercive implies that an optimal $\lambda^*$ will be achieved at the interior of the extended Euclidean space, from which consistency follows by Theorem 2.7 in \citet{newey1994large}.
\end{proof}

\begin{proof}[Proof of Corollary \ref{cor:consistency}]
We want to show that $\left(\sum_{j=1}^J\hat{\lambda}_{N_j}^*\hat{b}_{0j}\right)_\#\prob_{N_0}$ converges weakly in probability to $\left(\sum_{j=1}^J\lambda_{j}^*b_{0j}\right)_\# P_{0}$, where $\hat{\lambda}_N^*\coloneqq \left(\hat{\lambda}_{N_1}^*,\ldots,\hat{\lambda}_{N_J}^*\right)$ are the optimal weights obtained in \eqref{Eq:Barycenter_Solving_gen_emp} and \eqref{Eq:Barycenter_Solving_gen}, respectively. The result follows by applying the extended continuous mapping theorem \citep[Theorem 1.11.1]{wellner2013weak} as follows.

As shown in the proof of Proposition \ref{prop:consistency} we have ``strong $L^2$-convergence'' of the maps $\sum_{j=1}^J\hat{\lambda}^*_{N_j}\hat{b}_{0j}-\id$ to $\sum_{j=1}^J\lambda^*_jb_{0j}-\id$. Therefore, by Theorem 5.4.4 (iii) in \citet{ambrosio_gradient_2008}, it holds that
\[\lim_{\wedge_j N_j\to\infty} \int_{\mathbb{R}^d} f\left(x_0,\sum_{j=1}^J\hat{\lambda}_{N_j}^*\hat{b}_{0j}(x_0)-x_0\right) \dif\prob_{N_0}(x_0) = \int_{\mathbb{R}^d} f\left(x_0,\sum_{j=1}^J\lambda_{j}^*b_{0j}(x_0)-x_0\right) \dif P_{0}(x_0)\] for any continuous function such that $|f(x_0)|\leq C_1 + C_2\left\lvert \overline{x}_0 - x_0\right\rvert^2$ for all $x_0$ in the support of $P_0$, where $C_1,C_2<+\infty$ are some constants and $\overline{x}_0$ in some element in the support of $P_0$ \citep[equation (5.1.21)]{ambrosio_gradient_2008}.
%
%
In particular, this holds for any bounded and continuous function $f$, which implies that
\[\lim_{\wedge_j N_j\to\infty} \int_{\mathbb{R}^d} f\left(\sum_{j=1}^J\hat{\lambda}_{N_j}^*\hat{b}_{0j}(x_0)\right) \dif\prob_{N_0}(x_0) = \int_{\mathbb{R}^d} f\left(\sum_{j=1}^J\lambda_{j}^*b_{0j}(x_0)\right) \dif P_{0}(x_0)\] for any bounded and continuous function, which implies that $\left(\sum_{j=1}^J \hat{\lambda}^*_{N_j}\hat{b}_{0j}\right)_\#\prob_{N_0}$ converges weakly to $\left(\sum_{j=1}^J \lambda^*_{j}b_{0j}\right)_\# P_{0}$ if $\prob_{N_j}$ converge weakly to $P_j$, $j\in\llbracket J\rrbracket$.

Now we apply the extended continuous mapping theorem \citep[Theorem 1.11.1]{wellner2013weak}. Equip $\mathscr{P}_2(\mathbb{R}^d)$ with any metric $\widetilde{d}(\cdot,\cdot)$ that metrizes weak convergence.
%
%
We define the maps $g: \bigtimes_{j=0}^J\left(\mathscr{P}_2(\mathbb{R}^d), \widetilde{d}\right)_j\to \left(\mathscr{P}_2(\mathbb{R}^d), \widetilde{d}\right)$ by 
\[g\left(P_0,\ldots,P_J\right) = \left(\sum_{j=1}^J\lambda_j^*b_{0j}\right)_\# P_0 ~, \] and analogously for their empirical counterparts $g_N$. Note that $g$ and $g_N$ are non-random functions if the measures $P_j$ and $\prob_{N_j}$ are non-random themselves for all $j \in \llbracket J \rrbracket$. Moreover, by definition, $g$ and $g_N$ are continuous maps because $\sum_{j=1}^J\lambda_j^* b_{0j}$ are gradients of convex functions, which are continuous $P_0$-almost everywhere; the same thing holds for their empirical counterparts. Now from what we have shown above and in Proposition \ref{prop:consistency}, it holds that 
\[g_N\left(\prob_{N_0},\ldots,\prob_{N_J}\right) \to g\left(P_0,\ldots, P_J\right)\] as $\prob_{N_j}$ converge weakly to $P_j$. Since $\{ \prob_{N_j} \}_{j=1}^{J}$ here instead are the only random elements in $\bigtimes_{j=0}^J\left(\mathscr{P}_2(\mathbb{R}^d), \widetilde{d}\right)_j$, the extended continuous mapping theorem implies that
\[
    \lim_{\bigwedge_jN_j\to\infty} P\left(\widetilde{d}\left(g_N\left(\prob_{N_0},\ldots,\prob_{N_J}\right), g\left(P_0,\ldots, P_J\right)\right)>\varepsilon\right)=0\qquad\text{for all $\varepsilon>0$} ~,
\]
which is what we wanted to show.
%
\end{proof}


\section{Details from Simulations and Applications}
\label{Appendix:additional-details-application}

Our implementation is available at the GitHub repository \href{https://github.com/menghsuanhsieh/tangential-wasserstein-projection}{here}. The images used can be found in the repository; the Medicaid data can be downloaded from the Dropbox folder \href{https://www.dropbox.com/sh/y43s568l44ny8pz/AADmFeUdanq9PzKHYhESaPBaa?dl=0}{here}.

\subsection{Mixtures of Gaussian Simulation}
\label{Appendix:mixtures}

As a minor extension from our Gaussian simulation in the main text, we consider mixtures of Gaussian in dimension $d = 20$. As before, we draw from the following Gaussians:
    \[\mathbf{X}_j \sim \Normal \left( \mu_j, \Sigma \right), \quad j=0,1,2,3 ~, \]
where $\mu_0 = [10, 10, \dots, 10]$, $\mu_1 = [50, 50, \dots, 50]$, $\mu_2 = [200, 200, \dots, 200]$, $\mu_3 = [-50, -50, \dots, -50]$ and $\Sigma = \id_{20}+ 0.8 \id^{-}_{10}$, with $\id_{10}^{-}$ the $10\times 10$ matrix with zeros on the main diagonal and ones on all off-diagonal terms. We then define the following mixtures: $\mathbf{Y}_0$ as target, and $\mathbf{Y}_1$, $\mathbf{Y}_2$, and $\mathbf{Y}_3$ as controls, where
\begin{align*}
    \mathbf{Y}_0 &= 0.3 \mathbf{X}_0 + 0.6 \mathbf{X}_1 + 0.1\mathbf{X}_2\\
    \mathbf{Y}_1 &= 0.8 \mathbf{X}_0 + 0.1 \mathbf{X}_1 + 0.1\mathbf{X}_2 \\
    \mathbf{Y}_2 &= 0.2 \mathbf{X}_1 + 0.7 \mathbf{X}_2 + 0.1\mathbf{X}_3 \\
    \mathbf{Y}_3 &= 0.2 \mathbf{X}_0 + 0.2 \mathbf{X}_2 + 0.6\mathbf{X}_3
\end{align*}
We sample $N_0 = N_1 = N_2 = N_3 = 10000$ points from each mixture. The estimated optimal weights are $\lambda^* = [0.8204, 0.1796, 0]$. The results are as expected, since $\mathbf Y_1$ is defined by the same set of $\mathbf X$'s as those for $\mathbf Y_0$. In the case of $\mathbf Y_3$, it is not defined with $\mathbf X_1$, therefore we expected the associated weight to be very small. Table \ref{tab:mean-mixed-gaussian} suggests the projection is, again, very close to the target distribution when only considering the mean, again despite only having three control units.  


\begin{table}[H]
    \centering
    \footnotesize
    \begin{tabular}{|c|c|c|c|c|c|c|c|c|c|c|}
        \hline
        $d$ & 1 & 2 & 3 & 4 & 5 & 6 & 7 & 8 & 9 & 10 \\
        \hline
        mean$(\mathbf Y_0)$ & 52.994 & 52.991 & 52.998 & 52.998 & 52.998 & 52.997 & 53.007 & 52.995 & 52.992 & 53.006 \\
        \hline
        mean$(\sum_{j=1}^{3} \lambda_j^* \mathbf Y_j)$ & 53.114 & 53.107 & 53.108 & 53.122 & 53.115 & 53.113 & 53.112 & 53.122 & 53.106 & 53.122 \\
        \hline
    \end{tabular}
    \caption{Means of target and optimally-weighted controls $\mathbf Y_1$, $\mathbf Y_2$, $\mathbf Y_3$.}
    \label{tab:mean-mixed-gaussian}
\end{table}


\subsection{Details of Medicaid Expansion Application}
\label{Appendix:Details_Medicaid}

We use the ACS data with harmonized variables made available by IPUMS. The data is at the household-person-year level. For our application, we select the household head and the spouse as our unit of analysis. The continuous outcomes are adjusted using the person-level sample weights available in the data.

We adopt the following sample restriction criteria: we included individuals
\begin{itemize}
    \item of working age, i.e. between ages 18 and 65
    
    \item who has no missing outcomes (for those listed in the main text)
    
    \item who has no top-coded responses
    
    \item who are either household heads or their spouses
\end{itemize}
The sample size breakdown by states are follows:
\begin{table}[H]
    \centering
    \begin{tabular}{lcc}
    \hline\hline
      & \textbf{State} & \textbf{Observations} \\ 
    \hline
    \multicolumn{3}{l}{\textbf{Target}} \\ \hline
      & MT & 25,173 \\ 
    \hline\hline
    \multicolumn{3}{l}{\textbf{Control}} \\ \hline
      & AL & 106,464 \\ 
      & FL & 427,397 \\ 
      & GA & 227,659 \\ 
      & KS & 74,812 \\ 
      & MS & 61,505 \\ 
      & NC & 233,804 \\ 
      & SC & 107,905 \\ 
      & SD & 22,563 \\ 
      & TN & 152,470 \\ 
      & TX & 598,222 \\ 
      & WI & 157,410 \\ 
      & WY & 15,666 \\ 
   \hline\hline 
    \end{tabular}
    \caption{Summary of the full data sample used to obtain $\lambda^*$.}
\end{table}
\noindent We randomly select $N=1500$ observations from each unit for estimating $\lambda^*$. In the Python implementation, we face a challenge where if the entries of the target and control data are large enough, \eqref{Eq:Barycenter_Solving_Simple_Method} becomes too large for \texttt{CVXPY} to compute an optimal solution. Therefore, we introduce a stabilizing constant to prevent this. This stabilizing constant is determined by the mean value and dimensions of the target distribution, and the number of controls.

We check whether the obtained weights are fit for creating synthetic Montana by examining if they well-approximate actual Montana in the pre-treatment period. As seen in Figures \ref{fig:replication-comparison-binary} and \ref{fig:replication-comparison-continuous}, our projection is very similar to the actual data.

\begin{figure}[H]
    \centering
    \begin{subfigure}{0.47\textwidth}
         \centering
         \includegraphics[width=\textwidth]{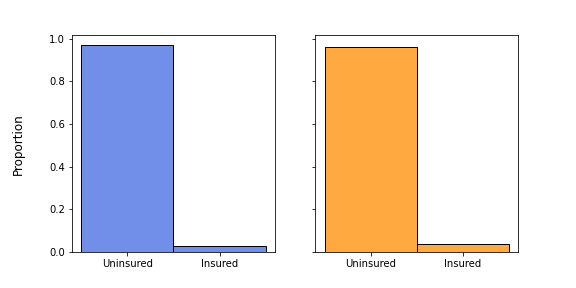}
         \caption{Covered by Medicaid}
         \label{fig:hinscaid-3}
    \end{subfigure}
    \hfill
    \begin{subfigure}{0.47\textwidth}
         \centering
         \includegraphics[width=\textwidth]{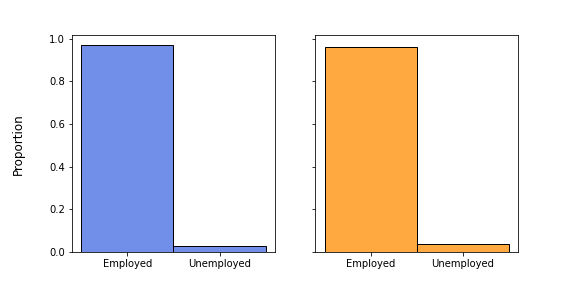}
         \caption{Employment Status}
         \label{fig:empstat-4}
    \end{subfigure}
    \caption{Replicated (blue) vs actual (orange) Montana from 2010 to 2016.}
    \label{fig:replication-comparison-binary}
\end{figure}

\begin{figure}[H]
    \centering
    \begin{subfigure}{0.47\textwidth}
         \centering
         \includegraphics[width=\textwidth]{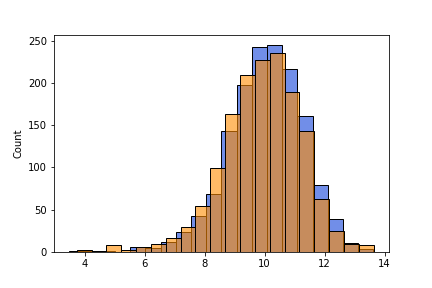}
         \caption{Log Wage}
         \label{fig:hinscaid-4}
    \end{subfigure}
    \hfill
    \begin{subfigure}{0.47\textwidth}
         \centering
         \includegraphics[width=\textwidth]{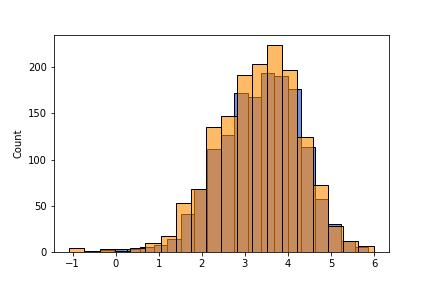}
         \caption{Log Labor Hours Supplied}
         \label{fig:empstat-5}
    \end{subfigure}
    \caption{Replicated (blue) vs actual (orange) Montana from 2010 to 2016. In each panel, histograms of data distributions are shown on the left, and cumulative distribution functions are shown on the right.}
    \label{fig:replication-comparison-continuous}
\end{figure}

Once we obtain the optimal weights $\lambda^*$, we estimate the counterfactual outcomes of interest for the four years after Medicaid expansion in Montana (namely, between 2017 and 2020). In practice, let $\widehat{F}_{t, j} (v)$ denote the empirical distribution function for $v \in \{\rm HINSCAID$, $\rm EMPSTAT$, $\rm UHRSWORK$, $\rm INCWAGE\}$ in state $j$ during the year $t \in \set{2017, 2018, 2019, 2020}$; then, the counterfactual distribution of outcome $v$ for Montana in year $t$ is defined as
\[
    \widehat{F}_{t, \rm Montana}^{\rm \: cf} (v) \triangleq \sum_{j \in \rm \{control \: states\}} \lambda^*_j \widehat{F}_{t,j} (v)
\]
We plot the densities and distributions of the counterfactual outcomes in Figures \ref{fig:first-order-results} and \ref{fig:second-order-results} of the main text.

\end{document}